\pdfoutput=1
\documentclass{article}

\usepackage[preprint]{neurips_2026}

\usepackage[utf8]{inputenc}
\usepackage[T1]{fontenc}
\usepackage{hyperref}
\usepackage{url}
\usepackage{booktabs}
\usepackage{amsfonts}
\usepackage{nicefrac}
\usepackage{microtype}
\usepackage{xcolor}
\usepackage{amsmath}
\usepackage{amssymb}
\usepackage{mathtools}
\usepackage{amsthm}
\usepackage{algorithm}
\usepackage{algorithmic}
\usepackage{multirow}
\usepackage{float}
\usepackage[capitalize,noabbrev]{cleveref}
\usepackage{enumitem}
\usepackage{wrapfig}
\usepackage{caption}
\theoremstyle{plain}
\newtheorem{theorem}{Theorem}[section]
\newtheorem{proposition}[theorem]{Proposition}
\newtheorem{lemma}[theorem]{Lemma}

\theoremstyle{definition}

\theoremstyle{remark}

\newcommand{\ve}{\mathbf{e}}

\title{Distilling Graph Geometry: \\Knowledge Gap from GNNs to MLPs}

\author{%
  Zhewei Chen$^{1}$\quad Hao Zhu$^{2}$\quad Jiaojiao Jiang$^{3}$\quad Ahad N. Zehmakan$^{1}$\\
  $^{1}$Australian National University\quad
  $^{2}$Data61, CSIRO\quad
  $^{3}$University of New South Wales\\
  {\small\texttt{\{zhewei.chen,\,ahadn.zehmakan\}@anu.edu.au}}\\
  {\small\texttt{allen.zhu@data61.csiro.au}\quad
  \texttt{jiaojiao.jiang@unsw.edu.au}}
}

\begin{document}

\maketitle

\begin{abstract}
GNN-to-MLP distillation aims to retain the predictive accuracy of a
message-passing teacher while deploying a graph-free MLP at inference.
Existing methods mainly transfer node-wise predictions or use
confidence-based reweighting, but they do not specify where the
student should preserve the teacher's graph-induced geometry. We show
that this omission leads to two spectral failure modes in the
student's representation space. On sparse graphs, the student suffers
from \emph{spectral underfit}, missing high-energy teacher directions
concentrated near boundary regions. On dense graphs, it suffers from
\emph{spectral overfit}, retaining spurious directions that the
teacher has collapsed through aggregation. Motivated by an
energy-weighted teacher--student alignment objective, we propose
\textbf{Graph Geometry-aware MLP (G$^2$MLP)}, a training-time
distillation framework guided by Ollivier--Ricci curvature. Curvature
identifies where the two spectral errors concentrate and is used to
allocate supervision between prediction-level and representation-level
alignment. The deployed model remains a standard MLP and requires no
graph access at inference. Across node-classification benchmarks,
G$^2$MLP consistently improves over graph-free distillation baselines,
reduces the teacher--student rank gap in both regimes, and transfers
without architectural changes to Graph Transformer teachers and link
prediction.
\end{abstract}

\section{Introduction}

Graph Neural Networks (GNNs)~\citep{gilmer2017neural} are the standard
tool for learning on graphs, but their reliance on message passing makes
inference slow in latency-sensitive applications such as fraud
detection~\citep{wang2021apan}. A growing line of work distils GNN
knowledge into a structure-free Multi-Layer Perceptron (MLP), enabling
fast graph-free inference~\citep{zhang2022graphless,guo2023linkless,tian2022learning}.
Most methods follow GLNN~\citep{zhang2022graphless} and match teacher
logits with a KL loss, sometimes injecting structural priors through
positional encodings~\citep{tian2022learning,wang2023graph},
codebooks~\citep{yang2023vqgraph}, or neighborhood alignment
cues~\citep{hu2021graph,wu2023extracting}. These methods improve the
student's information source, but they usually treat graph structure as
uniformly useful: every node receives the same type of distillation
pressure regardless of its topological role.

We argue that this uniform view misses an important source of
teacher--student mismatch. Message passing changes representations
differently across graph regions. Boundary nodes often contain
graph-induced directions that are hard to infer from features alone,
while dense interior nodes are often smoothed toward class-level
prototypes. As shown in Figure~\ref{fig:rank-mismatch}, this produces
two spectral failure modes in GNN-to-MLP distillation. On sparse graphs,
the student has lower effective rank than the teacher, which we call
\emph{spectral underfit}. On dense graphs, the teacher may collapse to a
low-rank geometry, while the feature-only student retains unsupported
variation, which we call \emph{spectral overfit}.

Our analysis in Section~\ref{sec:theory} formalizes this phenomenon
through the mismatch between the GNN kernel and the MLP kernel. The
resulting energy-weighted alignment objective $\mathcal{L}^*$ explains
why pointwise KL distillation is insufficient: two students can match
the same logits while inducing different representation geometries. It
also suggests that a practical surrogate should allocate supervision
according to where the spectral error concentrates. Ollivier--Ricci
curvature (ORC) provides such a local signal: low-curvature edges mark
boundary regions where random-walk neighborhoods disagree, while
high-curvature edges mark dense regions where aggregation strongly
smooths representations.

We instantiate this principle in \textbf{Graph Geometry-aware MLP
(G$^2$MLP)}. G$^2$MLP uses ORC to approximate this principled alignment:
low-curvature nodes receive stronger prediction-level
supervision to correct spectral underfit, while high-curvature nodes
receive stronger representation-level neighborhood alignment to reduce
spectral overfit. The representation term uses a closed-form diagonal
Gaussian $\mathcal{W}_2$ surrogate. All graph-dependent quantities are
used only during training, so the deployed model remains a vanilla MLP.
Our contributions are:
\begin{itemize}[leftmargin=1.5em,itemsep=1pt,topsep=2pt]
    \item We identify spectral underfit and spectral overfit as two
    failure modes of GNN-to-MLP distillation, and connect them to
    boundary and interior graph regions.
    \item We derive an energy-weighted alignment objective
    $\mathcal{L}^*$ that characterizes teacher--student geometric
    mismatch and motivates curvature-dependent allocation of
    distillation supervision.
    \item We propose G$^2$MLP, a curvature-guided distillation framework
    that combines prediction- and representation-level alignment while
    preserving graph-free MLP inference.
\end{itemize}

\begin{figure}[t]
  \centering
  \includegraphics[width=\linewidth]{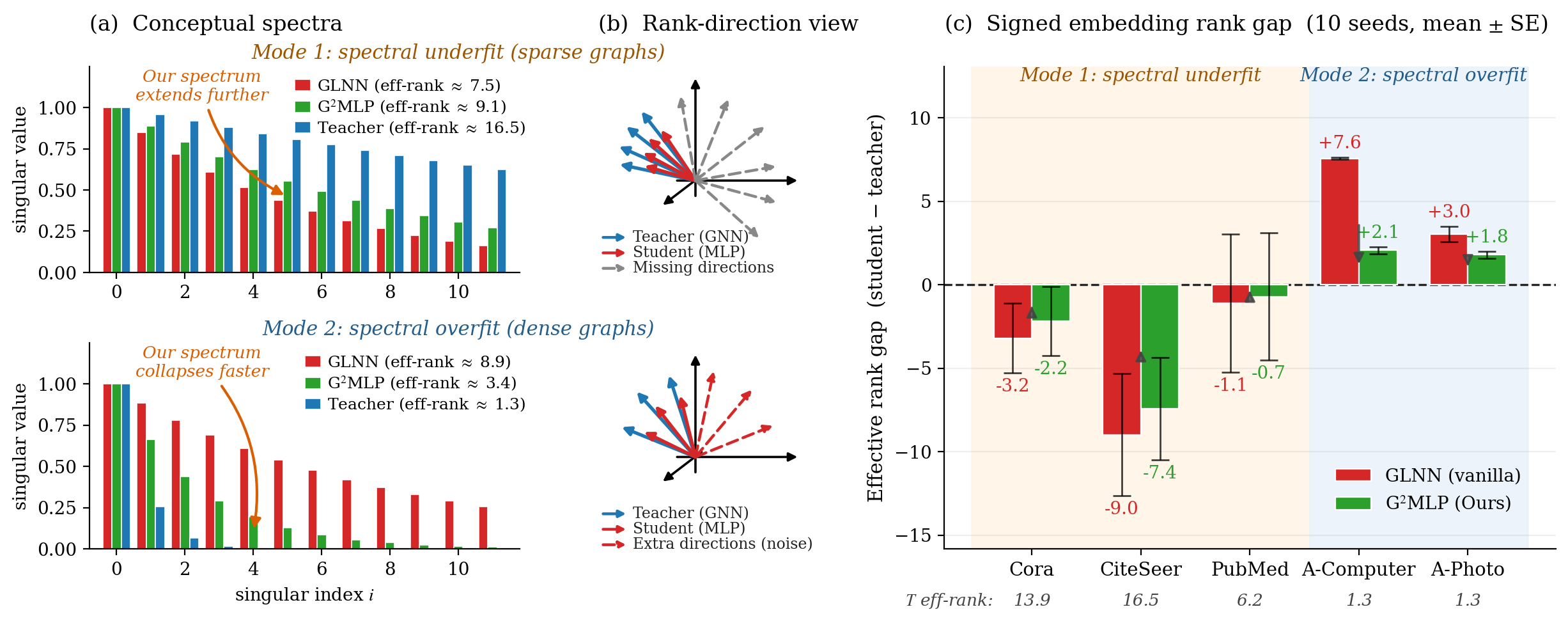}
  \caption{\textbf{Two spectral failure modes of GNN-to-MLP distillation.}
  \textbf{(a)} Singular-value spectra: GLNN exhibits \emph{spectral underfit}
  on sparse graphs and \emph{spectral overfit} on dense graphs.
  \textbf{(b)} Rank-direction sketch. \textbf{(c)} Per-dataset rank gap.}
  \label{fig:rank-mismatch}
\end{figure}

\vspace{-0.4cm}
\section{Related Work}

\noindent\textbf{GNN-to-MLP knowledge distillation.}
GLNN~\citep{zhang2022graphless} established the paradigm of distilling
a GNN teacher into an MLP student through soft-label
KL~\citep{hinton2015distilling}, eliminating graph access at inference.
Follow-up work enriches the student's input or supervision target:
positional encodings (NOSMOG~\citep{tian2022learning}), structural
codebooks (VQGraph~\citep{yang2023vqgraph}), neighborhood alignment
(FF-G2M~\citep{wu2023extracting}), and hardness-aware reweighting
(KRD~\citep{wu2023krd}, HGMD~\citep{wu2024hgmd},
AdaGMLP~\citep{lu2024adagmlp}, MSN-GDM~\citep{bibal2024msn}).
SimMLP~\citep{yang2024simmlp} swaps the supervision target for a
self-supervised contrastive objective. These methods either treat
nodes uniformly or derive node weights from teacher confidence; none
ties reweighting to a graph-topology quantity or to a named failure
mode of uniform KD.

\noindent\textbf{Optimal transport for distillation.}
WCoRD~\citep{chen2021wcord} and EMD+IPOT~\citep{lohit2022emdipot}
transfer knowledge via discrete Wasserstein distances at the
penultimate layer, but rely on cross-instance matching that does not
exploit class structure. WKD~\citep{chen2024wkd} introduces continuous
Gaussian $\mathcal{W}_2$ as a Riemannian metric for feature
distillation, sharper than geometry-unaware
KL~\citep{peyre2019computational}. We extend WKD's Gaussian-$\mathcal{W}_2$
template to the graph setting, fitting it per node over neighborhood
representations and weighting it by curvature.

\noindent\textbf{Curvature on graphs.}
Ollivier--Ricci curvature (ORC)~\citep{ollivier2009ricci} measures the
$\mathcal{W}_1$ distance between neighborhood walks of adjacent nodes,
and has been used for community detection~\citep{sia2019ollivier} and
information-flow analysis~\citep{topping2022understanding}.
In GNNs, \citet{nguyen2023revisiting} link positive curvature to
over-smoothing and negative curvature to over-squashing, motivating
rewiring (BORF) and dropout (CurvDrop~\citep{liu2023curvdrop}) schemes;
\citet{hevapathige2025depth} adapt depth via learnable
Bakry--Émery curvature. These works treat curvature as a structural
signal for modifying the GNN itself; we instead use it to stratify
distillation pressure between teacher and student.

\vspace{-0.3cm}
\section{Why Uniform Distillation Fails under Curvature Heterogeneity}
\label{sec:theory}

We develop a kernel-motivated analysis of GNN-to-MLP distillation.
Our goal is not to derive an exact computable form of the GNTK, but to
identify which parts of the teacher geometry a graph-free student tends
to miss and how graph curvature can guide a tractable surrogate. The
argument proceeds in three steps: the teacher kernel contains
graph-dependent aggregation terms absent from the MLP kernel; this
mismatch appears as two opposite spectral failure modes; and ORC
provides a local graph signal for allocating supervision where these
modes concentrate. The result is an energy-weighted target objective
$\mathcal{L}^*$ and a curvature-guided approximation developed in
Section~\ref{sec:method}.

\noindent\textbf{Teacher and Student Kernel Mismatch.}
\label{sec:theory-kernels}
NTK theory~\citep{jacot2018neural} characterizes infinitely wide
networks trained by gradient flow: at convergence, the network output
lies in the function space induced by a fixed kernel, the network's
NTK. For a $K$-layer GNN with feature aggregation $\mathcal{A}$ and
activation $\sigma$, the GNTK $\mathbf{K}^{(K)}$ admits a recursion of
the schematic form
\begin{equation}
    \mathbf{K}^{(k)}(v, u)
    =
    \underbrace{\mathcal{A}\!\left[\Sigma^{(k-1)}\right]\!(v, u)}
    _{\text{aggregation over }(\mathcal{N}(v),\mathcal{N}(u))}
    +
    \dot{\Sigma}^{(k)}(v, u)\mathbf{K}^{(k-1)}(v, u),
    \label{eq:gntk-schematic}
\end{equation}
where $\Sigma^{(k)}$ is the post-activation covariance and
$\dot{\Sigma}^{(k)}$ its derivative kernel; the full recursion follows
\citet{du2019graph} and is given in Appendix~\ref{app:gntk}.
Here $\mathcal{G}$ denotes the input graph, $\mathbf{X}$ the node feature matrix, and $\mathcal{N}(v)$ the 1-hop neighborhood of $v$.
The aggregation term sums covariance over neighborhood pairs, so
$\mathbf{K}^{(K)}$ depends jointly on $(\mathbf{X},\mathcal{G})$.
The MLP student's NTK $\mathbf{K}^{\mathrm{MLP}}$ is the
aggregation-free counterpart and depends only on $\mathbf{X}$. Thus
$\Delta^{(K)}=\mathbf{K}^{(K)}-\mathbf{K}^{\mathrm{MLP}}$ captures the
graph-dependent component introduced by message passing. The following
statement is used qualitatively to motivate local surrogates; we do not
claim that the full GNTK is sparse or that curvature exactly estimates
its eigenspectrum.

\begin{lemma}[Aggregation Locality]
    \label{lem:locality}
    For a $K$-layer message-passing GNN with Lipschitz activation, the
    graph-dependent aggregation component of $\mathbf{K}^{(K)}$ is
    generated through paths contained in $K$-hop receptive fields. Hence
    the graph-induced contribution associated with a node pair $(v,u)$
    depends on the overlap and feature similarity of their $K$-hop
    neighborhoods. The full kernel need not be sparse, since feature
    covariance may remain nonzero for distant nodes.
\end{lemma}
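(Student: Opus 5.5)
The plan is to unroll the GNTK recursion of \citet{du2019graph} (Appendix~\ref{app:gntk}) by induction on the layer index $k$. The aim is an explicit path-sum representation of the graph-dependent part of $\mathbf{K}^{(k)}(v,u)$.

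For concreteness, write the aggregation as $\mathcal{A}[M](v,u)=\sum_{v'\in\mathcal{N}(v)\cup\{v\}}\sum_{u'\in\mathcal{N}(u)\cup\{u\}} c_{vv'}c_{uu'}M(v',u')$, with normalisation weights $c$. The GNTK recursion applies $\mathcal{A}$ to both the pre-activation covariance and the tangent kernel. The dual-activation maps $\Sigma\mapsto\Sigma^{(k)},\dot\Sigma^{(k)}$ act entrywise, through the $2\times 2$ covariance block at $(v',u')$. Lipschitzness of $\sigma$ ensures these maps are well defined and Lipschitz in that block, because $\dot\sigma$ is bounded.

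The inductive hypothesis $H_k$ states the following. $\mathbf{K}^{(k)}(v,u)$ and $\Sigma^{(k)}(v,u)$ are measurable functions only of the features $\{\mathbf{x}_w\}$ and of the edge weights on walks of length $\le k$ from $v$ and from $u$. In other words, they depend only on the restrictions of $(\mathbf{X},\mathcal{G})$ to the $k$-hop balls $B_k(v)$ and $B_k(u)$.

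The base case $k=0$ holds because $\Sigma^{(0)}(v,u)=\mathbf{x}_v^\top\mathbf{x}_u$ involves no graph at all. For the step, one application of $\mathcal{A}$ replaces $(v,u)$ by pairs $(v',u')$ with $v'\in B_1(v)$ and $u'\in B_1(u)$. Since $B_{k-1}(v')\subseteq B_k(v)$, the hypothesis $H_{k-1}$ gives $H_k$. The entrywise dual-activation step preserves this dependence set.

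Unrolling the recursion then gives
\[
\mathbf{K}^{(K)}(v,u)=\sum_{p,q}\, w_p w_q\, \Phi_{p,q}\bigl(\mathbf{x}_{p},\mathbf{x}_{q}\bigr).
\]
Here $p,q$ range over walks of length $\le K$ starting at $v$ and at $u$, and $w_p$ is the product of aggregation weights along $p$. Each $\Phi_{p,q}$ is built from iterated Lipschitz dual maps of the endpoint feature inner products.

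Next I isolate the graph-induced part. Setting $\mathcal{A}$ to the identity (self-loops only) keeps exactly the pair of constant walks, and this term reproduces the recursion of $\mathbf{K}^{\mathrm{MLP}}$. Hence
\[
\Delta^{(K)}(v,u)=\sum_{(p,q)\neq(\text{const},\text{const})} w_p w_q\,\Phi_{p,q}
\]
is a sum over pairs of nontrivial walks confined to $B_K(v)\times B_K(u)$. This proves the first claim of Lemma~\ref{lem:locality}, that the aggregation component is generated through paths contained in the $K$-hop receptive fields.

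For the second claim, I decompose each summand according to whether the two walks share endpoints. Pairs with $p$ and $q$ ending at a common node $w\in B_K(v)\cap B_K(u)$ contribute terms depending on $\|\mathbf{x}_w\|$, which is the overlap contribution. The remaining pairs contribute through $\mathbf{x}_{p_{\mathrm{end}}}^\top\mathbf{x}_{q_{\mathrm{end}}}$, which is the feature-similarity contribution. Lipschitzness bounds each $\Phi_{p,q}$ by a constant times the endpoint feature inner products, so the graph-induced contribution is controlled by these two quantities.

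For non-sparsity I give a simple example. Take two nodes at distance greater than $2K$ with identical features. Then $\mathbf{K}^{\mathrm{MLP}}(v,u)\neq0$, and also $\Delta^{(K)}(v,u)\neq0$ through the feature-similarity terms, even though the overlap is empty.

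The main obstacle is making ``depends on overlap and feature similarity'' precise without a closed form. The dual maps are nonlinear, so $\Phi_{p,q}$ does not factor cleanly into overlap and similarity parts. I would first try a Lipschitz perturbation bound: if features are perturbed outside $B_K(v)\cup B_K(u)$ the kernel entry is unchanged, and if they are perturbed inside, the change is bounded by $L^{K}$ times the change in the endpoint Gram entries. I would present the overlap and similarity dependence at this qualitative level, consistent with how the lemma is used.
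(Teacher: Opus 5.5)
Your induction $H_k$ is the paper's proof. One aggregation step sends $(v,u)$ to pairs in $B_1(v)\times B_1(u)$, the triangle inequality gives $B_{k-1}(v')\subseteq B_k(v)$, and the dual-activation maps at $(v',u')$ read only $\Sigma(v',v')$, $\Sigma(v',u')$, $\Sigma(u',u')$. Add the remark that the MLP recursion never leaves the pair $(v,u)$, so $\mathbf{K}^{\mathrm{MLP}}(v,u)$ is a function of $\mathbf{x}_v,\mathbf{x}_u$ alone. Then $H_K$ already gives everything the paper establishes: $\Delta^{(K)}(v,u)$ is determined by the data on $B_K(v)\cup B_K(u)$. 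You should stop there. The paper treats the ``overlap and feature similarity'' clause only in this dependence-set sense. Its proof also never uses Lipschitzness of $\sigma$, only that the dual maps are deterministic functions of the $2\times 2$ block.

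The machinery you build on top of $H_K$ is not sound and should be dropped.

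\emph{The path-sum expansion.} The form $\mathbf{K}^{(K)}(v,u)=\sum_{p,q}w_pw_q\Phi_{p,q}(\mathbf{x}_p,\mathbf{x}_q)$, with $\Phi_{p,q}$ depending only on walk endpoints, exists only for linear activations. Once a dual map is applied to an aggregated covariance, its arguments are themselves sums over walk pairs. The diagonal entries $\Sigma(v',v')$ even bring in pairs of walks that both start near $v$. A nonlinear map cannot be distributed over those sums. So your formula for $\Delta^{(K)}$ as a sum over non-constant walk pairs is unjustified, and so is the endpoint-sharing split into ``overlap'' and ``similarity'' terms. Even in the linear case, the constant-walk term carries powers of the self-loop weights $c_{vv}c_{uu}$, so it reproduces $\mathbf{K}^{\mathrm{MLP}}(v,u)$ only when $c_{vv}=1$.

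\emph{Lipschitz bounds on the dual maps.} These maps are not Lipschitz in the block, and they need not vanish at zero correlation. For ReLU the derivative kernel is $\tfrac12-\arccos(\rho)/(2\pi)$. It equals $\tfrac14$ for orthogonal inputs, and it is only $\tfrac12$-H\"older near $\rho=\pm1$. Hence both your bound by ``a constant times the endpoint inner products'' and your $L^K$ perturbation bound fail.

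\emph{Non-sparsity.} Your argument is a non sequitur. $\mathbf{K}^{\mathrm{MLP}}(v,u)\neq 0$ and $\Delta^{(K)}(v,u)\neq 0$ do not imply $\mathbf{K}^{(K)}(v,u)\neq 0$, and $\Delta^{(K)}(v,u)\neq0$ is asserted without proof. A cleaner witness is constant nonzero features under mean aggregation. Then every entry of $\mathbf{K}^{(K)}$ equals the same positive value, while $\Delta^{(K)}\equiv 0$. This is exactly the lemma's point: non-sparsity comes from feature covariance, not from the graph term. The paper itself simply notes that $\Sigma^{(0)}$ is generically nonzero for every pair.
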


\vspace{-0.3cm}
\subsection{Two Failure Modes and the Ideal Distillation Objective}
\label{sec:theory-modes}

\noindent\textbf{Spectral distortion.}
We characterize the student's coverage of the teacher's geometry in
the teacher's eigenbasis. Let $\mathbf{K}^{(K)}=\sum_i
\lambda_i^K\mathbf{u}_i\mathbf{u}_i^\top$, with
$\lambda_1^K\geq\cdots\geq\lambda_N^K\geq0$. For each direction with
$\lambda_i^K>0$, define
\begin{equation}
    \rho_i
    =
    \frac{\mathbf{u}_i^\top \mathbf{K}^{\mathrm{MLP}}\mathbf{u}_i}
    {\lambda_i^K}.
    \label{eq:rho-def}
\end{equation}
When $\rho_i<1$, the student has insufficient energy along a teacher
direction, which we call \emph{spectral underfit}; when $\rho_i>1$, the
student retains energy in directions that the teacher kernel weakly
supports, which we call \emph{spectral overfit}. We summarize the
diagonal mismatch in the teacher eigenbasis by
\begin{equation}
    \mathcal{D}
    =
    \sum_i(\rho_i-1)^2\lambda_i^K .
    \label{eq:agg-distortion}
\end{equation}
This quantity does not compare all off-diagonal kernel entries. It
measures how much MLP energy differs from teacher energy along teacher
eigendirections, with high-energy teacher directions receiving larger
weight. Understanding where these discrepancies arise is therefore key
to allocating distillation effort.

Define Ollivier--Ricci curvature as
$\kappa(i,j)=1-W_1(\mu_i,\mu_j)$, where $\mu_v$ is the lazy random-walk
distribution on the closed neighborhood $\mathcal{N}^+(v)$. Let
$\bar{\kappa}_v$ average $\kappa(v,u)$ over $u\in\mathcal{N}(v)$, and
let $\widetilde{\kappa}_v=(\bar{\kappa}_v-\mu_\kappa)/\sigma_\kappa$.
For a Lipschitz feature map $\phi$, define the neighborhood embedding
$\bar{\mathbf{x}}_v=\sum_u\mu_v(u)\phi(\mathbf{x}_u)$. The following
proposition links ORC to local aggregation disagreement under explicit
metric assumptions.

\begin{proposition}[ORC as a Local Mismatch Proxy]
    \label{prop:mismatch}
    Suppose $\phi$ is $L$-Lipschitz with respect to the ground metric
    used in $W_1$. Then for any edge $(i,j)$,
    \begin{equation}
        \|\bar{\mathbf{x}}_i-\bar{\mathbf{x}}_j\|_2
        \leq
        L\,W_1(\mu_i,\mu_j)
        =
        L(1-\kappa(i,j)).
        \label{eq:lipschitz-agg}
    \end{equation}
    Thus low-curvature edges correspond to pairs whose local random-walk
    neighborhoods are difficult to align, while high-curvature edges
    correspond to locally coherent neighborhoods.
\end{proposition}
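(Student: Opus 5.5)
The inequality is an instance of Kantorovich--Rubinstein duality applied coordinate-free to a vector-valued Lipschitz map, followed by the definition of $\kappa$. Fix an edge $(i,j)$ and let $\pi$ be an optimal coupling of $\mu_i$ and $\mu_j$ for the ground metric $d$ used in $W_1$, here the shortest-path distance on $\mathcal{G}$. Both measures are finitely supported on $\mathcal{N}^+(i)\cup\mathcal{N}^+(j)$, so an optimal coupling exists. It is a nonnegative matrix $\pi(u,w)$ with marginals $\mu_i$ and $\mu_j$ and $\sum_{u,w}\pi(u,w)\,d(u,w)=W_1(\mu_i,\mu_j)$.

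The plan proceeds in three steps.

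\emph{Rewrite the difference through the coupling.} Since the marginals of $\pi$ are $\mu_i$ and $\mu_j$,
\begin{equation*}
\bar{\mathbf{x}}_i-\bar{\mathbf{x}}_j=\sum_u\mu_i(u)\phi(\mathbf{x}_u)-\sum_w\mu_j(w)\phi(\mathbf{x}_w)=\sum_{u,w}\pi(u,w)\bigl(\phi(\mathbf{x}_u)-\phi(\mathbf{x}_w)\bigr).
\end{equation*}

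\emph{Apply the triangle inequality and the Lipschitz hypothesis.} By the triangle inequality for $\|\cdot\|_2$ and nonnegativity of $\pi$,
\begin{equation*}
\|\bar{\mathbf{x}}_i-\bar{\mathbf{x}}_j\|_2\le\sum_{u,w}\pi(u,w)\,\|\phi(\mathbf{x}_u)-\phi(\mathbf{x}_w)\|_2\le L\sum_{u,w}\pi(u,w)\,d(u,w)=L\,W_1(\mu_i,\mu_j).
\end{equation*}
The second inequality uses the hypothesis that $\phi$ is $L$-Lipschitz, read as $\|\phi(\mathbf{x}_u)-\phi(\mathbf{x}_w)\|_2\le L\,d(u,w)$ for nodes $u,w$.

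\emph{Substitute the curvature.} The definition $\kappa(i,j)=1-W_1(\mu_i,\mu_j)$ turns the right-hand side into $L(1-\kappa(i,j))$, which is \eqref{eq:lipschitz-agg}.

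The interpretive sentence then follows immediately. Small $\kappa$ (large $W_1$) only permits large disagreement of the aggregated embeddings. Large $\kappa$ forces $\bar{\mathbf{x}}_i\approx\bar{\mathbf{x}}_j$, which is local coherence. I would remark that the converse direction is not claimed.

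The only delicate point is making the Lipschitz hypothesis precise. The statement calls $\phi$ Lipschitz "with respect to the ground metric used in $W_1$", but $\phi$ is applied to features $\mathbf{x}_u$ while $W_1$ is computed over nodes. I would therefore state explicitly that the hypothesis means the node-indexed map $u\mapsto\phi(\mathbf{x}_u)$ is $L$-Lipschitz from $(V,d)$ into $(\mathbb{R}^m,\|\cdot\|_2)$.

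If instead one wanted to pass through scalar Kantorovich--Rubinstein duality, one would fix a unit vector $\mathbf{e}$ and apply duality to the $L$-Lipschitz scalar function $u\mapsto\langle\mathbf{e},\phi(\mathbf{x}_u)\rangle$. Taking the supremum over $\mathbf{e}$ gives the same bound. The coupling argument above avoids this detour and is the route I would take.
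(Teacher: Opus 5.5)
Your proposal is correct and follows essentially the same route as the paper's proof. Both write $\bar{\mathbf{x}}_i-\bar{\mathbf{x}}_j$ as an expectation under a coupling of $\mu_i$ and $\mu_j$, bound it via the triangle inequality and the node-indexed Lipschitz assumption (your reading of the hypothesis matches the paper's (A1)), and substitute $W_1=1-\kappa(i,j)$. The only difference is cosmetic: you fix an optimal coupling up front, justified by finite support, whereas the paper bounds under an arbitrary coupling and then takes the infimum.
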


Therefore, ORC is a locally computable proxy for where graph-dependent
teacher information is likely to matter, without requiring construction
or diagonalization of $\Delta^{(K)}$. This motivates the
curvature-stratified weights in Section~\ref{sec:curv-metric}.

\noindent\textbf{Why pointwise KL is insufficient.}
The standard distillation loss $\mathcal{L}_{\mathrm{KD}} =
\frac{1}{|\mathcal{V}|}\sum_v D_{\mathrm{KL}}(\hat{\mathbf{y}}_v^T
\| \hat{\mathbf{y}}_v^S)$ sums node-wise divergences that depend on
the teacher's marginal predictive distribution. Two students may match
the same teacher logits while inducing different neighborhood
geometries in hidden space. Consequently, pointwise KL does not
directly control the graph-induced representation geometry encoded by
$\mathbf{K}^{(K)}$. Reducing spectral distortion therefore requires an
additional objective sensitive to neighborhood-level teacher-student
alignment.

\noindent\textbf{The target alignment loss.}
A natural energy-weighted target is
\begin{equation}
    \mathcal{L}^*
    =
    \|\mathbf{f}^T - \mathbf{f}^S\|_{\mathbf{K}^{(K)}}^2
    =
    \sum_i \lambda_i^K \!\left(\hat{f}_i^T - \hat{f}_i^S\right)^2,
    \label{eq:ideal-loss}
\end{equation}
where $\hat{f}_i^T=\mathbf{u}_i^\top\mathbf{f}^T$ and
$\|\mathbf{f}\|_{\mathbf{K}^{(K)}}^2=
\mathbf{f}^\top\mathbf{K}^{(K)}\mathbf{f}$ denotes the finite-dimensional
teacher-kernel energy form, not the RKHS norm. The weights
$\lambda_i^K$ emphasize teacher directions with high functional energy.
In spectral underfit, this prioritizes teacher directions that the MLP
does not recover; in spectral overfit, it downweights near-null teacher
directions where the student retains unsupported variation. Directly
using $\mathcal{L}^*$ is impractical because it requires storing and
diagonalizing the full teacher kernel. We therefore use it as a
principled target and approximate it with local, curvature-guided
alignment in Section~\ref{sec:method}.

\vspace{-0.3cm}
\subsection{Curvature Stratifies the Failure Modes}
\label{sec:theory-curvature}

\noindent\textbf{Mode-conditional failure structure.}
Proposition~\ref{prop:mismatch} indicates that curvature separates
locally incoherent boundary neighborhoods from coherent interior
neighborhoods. Let $\mathbf{H}^T,\mathbf{H}^S\in\mathbb{R}^{N\times d_h}$
denote penultimate-layer representations, and define the demeaned
effective rank $\mathrm{erank}(\mathbf{H})=\exp(-\sum_i p_i\log p_i)$,
where $p_i\propto\sigma_i^2(\mathbf{H}-\bar{\mathbf{H}})$. The rank gap
$\Delta r=\mathrm{erank}(\mathbf{H}^S)-\mathrm{erank}(\mathbf{H}^T)$ is
used as an observable diagnostic of representation geometry, not as an
exact estimator of the teacher-kernel spectrum.

\begin{itemize}
    \item \textbf{Spectral underfit ($\Delta r<0$).}
    On sparse graphs with heterogeneous curvature, aggregation can
    create boundary-sensitive teacher directions that the feature-only
    student does not recover. These directions have large boundary
    variation
    $\mathrm{BV}(\mathbf{u}_i)=
    \sum_{(v,u)\in\mathcal{E}}(1-\kappa(v,u))
    ([\mathbf{u}_i]_v-[\mathbf{u}_i]_u)^2$,
    and are therefore associated with low-curvature edges.

    \item \textbf{Spectral overfit ($\Delta r>0$).}
    On dense graphs with relatively homogeneous curvature, aggregation
    can collapse teacher representations toward low-rank class
    prototypes, while the feature-only student retains variation in
    directions weakly supported by the teacher. This excess variation is
    most visible in high-curvature interior subgraphs where local
    neighborhoods are already strongly smoothed.
\end{itemize}

\noindent\textbf{Implication for the loss design.}
The two modes concentrate at different parts of the curvature spectrum:
spectral underfit appears mainly in low-$\kappa$ boundary regions,
whereas spectral overfit appears mainly in high-$\kappa$ interior
subgraphs. A uniformly weighted loss does not explicitly distinguish
these regimes. Increasing pressure on boundary-discriminative
directions also increases pressure on already-smoothed interior regions,
and the converse is also true. This motivates a curvature-stratified
objective in which prediction-side and representation-side alignment
receive opposite curvature-dependent emphasis, as developed in
Section~\ref{sec:method}.

\newpage
\section{G$^2$MLP: Curvature-Guided Graph Wasserstein Alignment}
\label{sec:method}

\begin{figure}[H]
  \centering
  \includegraphics[width=\linewidth]{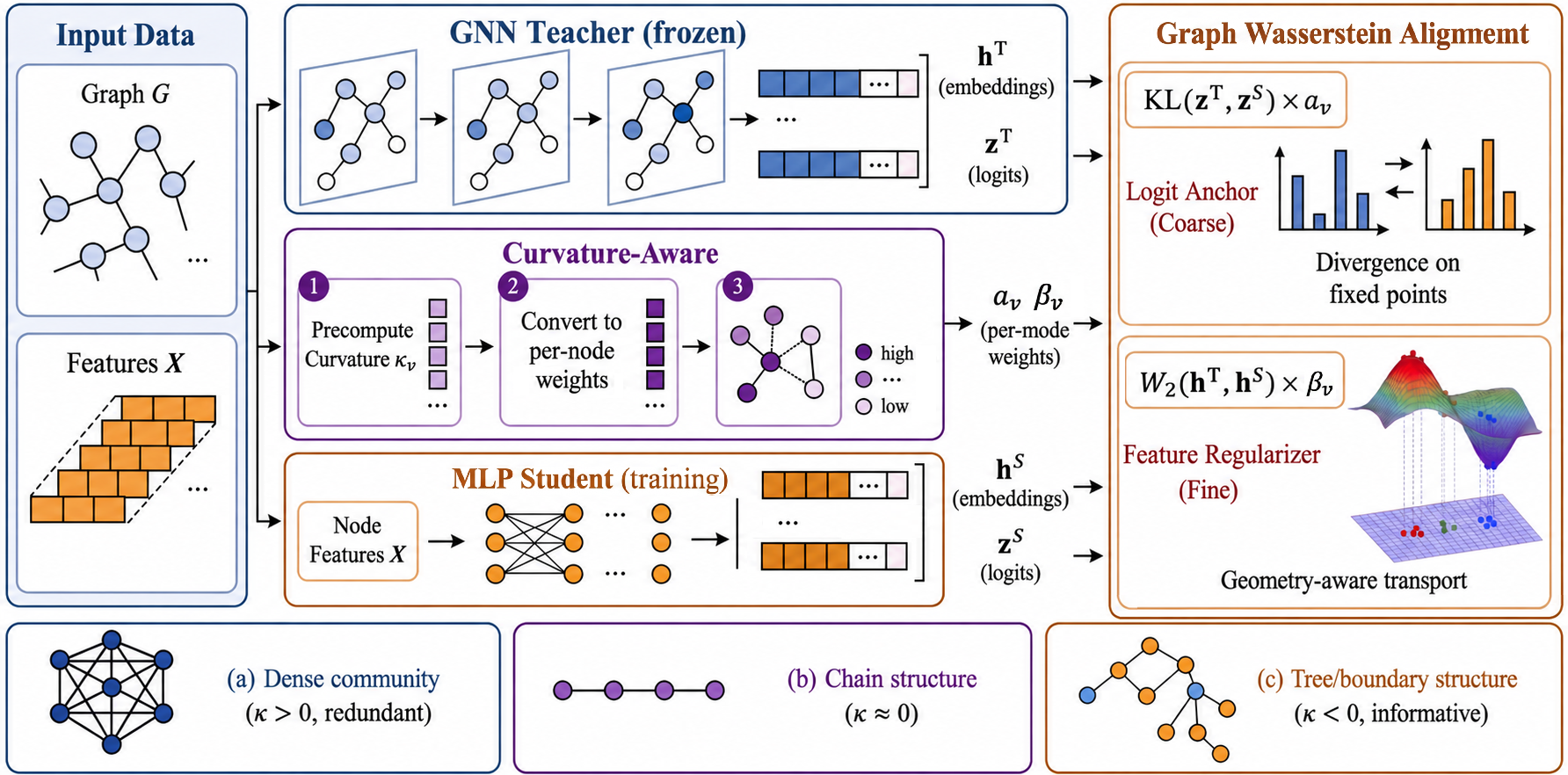}
  \caption{\textbf{Overview of G$^2$MLP.} A frozen GNN teacher and an MLP
student produce embeddings and logits; precomputed Ollivier--Ricci
curvature yields per-node weights $\alpha_v$ and $\beta_v$ that
allocate prediction-level and representation-level alignment during
training.}
  \label{fig:framework}
\end{figure}

The target objective $\mathcal{L}^*$ in Eq.~\eqref{eq:ideal-loss} is
computationally intractable because it requires the full teacher kernel
spectrum. We approximate it through three practical relaxations: global
kernel alignment is replaced by local neighborhood distribution
matching, discrete optimal transport is replaced by a diagonal Gaussian
$\mathcal{W}_2$ surrogate, and uniform node weighting is replaced by
ORC-dependent weighting. These steps are not exact equivalences; they
are tractable approximations designed to preserve the graph-dependent
teacher geometry most relevant to spectral underfit and spectral
overfit.

\subsection{From Global Alignment to Local Neighborhood Measures}
\label{sec:localization}

The first source of intractability is the global scope of
$\mathbf{K}^{(K)}$. By Lemma~\ref{lem:locality}, the graph-dependent
part of the teacher kernel is generated through message-passing
neighborhoods, even though the full kernel may also contain nonlocal
feature covariance. We therefore approximate the energy-weighted
alignment in $\mathcal{L}^*$ by matching local teacher and student
representation distributions.

For each node $v$, let $\mu_v$ be the lazy random walk distribution
over $\mathcal{N}^+(v)$, defined as $\mu_v(u)=
\frac{1}{2}\mathbf{1}\{u=v\}+\frac{1}{2|\mathcal{N}(v)|}
\mathbf{1}\{u\in\mathcal{N}(v)\}$, and define the local
representation measures
\begin{equation}
    \nu_v^T
    =
    \sum_{u \in \mathcal{V}} \mu_v(u)\,\delta_{\mathbf{r}_u^T},
    \qquad
    \nu_v^S
    =
    \sum_{u \in \mathcal{V}} \mu_v(u)\,\delta_{\mathbf{r}_u^S},
    \label{eq:local-measures-method}
\end{equation}
where $\delta_{\mathbf{x}}$ denotes the Dirac measure at point
$\mathbf{x}$ and
$\mathbf{r}_u=[\mathbf{h}_u^\top,\mathbf{z}_u^\top]^\top$
concatenates the penultimate hidden features and output logits of
node $u$. Matching $\nu_v^T$ and $\nu_v^S$ in Wasserstein-2 distance
encourages agreement in both local means and local dispersion.
Aggregating over nodes gives the \emph{Graph Wasserstein objective}
\begin{equation}
    \mathcal{L}_{\mathrm{GW}}
    =
    \frac{1}{|\mathcal{V}|}
    \sum_{v \in \mathcal{V}}
    \mathcal{W}_2^2\!\left(\nu_v^T,\nu_v^S\right).
    \label{eq:gw-loss}
\end{equation}
Unlike pointwise KL, this term depends on neighborhood representation
geometry: students with identical nodewise predictions can receive
different gradients if their local representation distributions differ.
In practice, we use the uniform one-hop proxy
$q_v(u)=|\mathcal{N}^+(v)|^{-1}\mathbf{1}\{u\in\mathcal{N}^+(v)\}$ for
computational simplicity, and write $\widetilde{\nu}_v$ for the
resulting empirical measure.

\subsection{Efficient Neighborhood Alignment via Moment Matching}
\label{sec:gaussian-surrogate}

Each $\mathcal{W}_2^2(\widetilde{\nu}_v^T,\widetilde{\nu}_v^S)$ is an
optimal transport problem over $|\mathcal{N}^+(v)|$ atoms, costing
$O(|\mathcal{N}^+(v)|^3)$ per node. To obtain a scalable surrogate, we
match the first two marginal moments of the teacher and student
neighborhood measures. This approximation is most reliable in
high-curvature neighborhoods, where adjacent nodes have large
structural overlap and the local representation distribution is often
concentrated. We use the diagonal Gaussian approximation
\begin{align}
    \widetilde{\nu}_v^T
    \approx
    \mathcal{N}(\mathbf{m}_v^T,\operatorname{diag}(\boldsymbol{\sigma}_v^T)),
    \quad
    \widetilde{\nu}_v^S
    \approx
    \mathcal{N}(\mathbf{m}_v^S,\operatorname{diag}(\boldsymbol{\sigma}_v^S)),
    \label{eq:diag-gaussian-local}
\end{align}
where $\mathbf{m}_v=\sum_{u\in\mathcal{N}^+(v)}q_v(u)\mathbf{r}_u$ and
$\boldsymbol{\sigma}_v=
\sum_{u\in\mathcal{N}^+(v)}q_v(u)(\mathbf{r}_u-\mathbf{m}_v)^2$
denotes the coordinate-wise variance. The resulting closed-form
distance is
\begin{equation}
    \mathcal{W}_{2,\mathrm{diag}}^2(\widetilde{\nu}_v^T,\widetilde{\nu}_v^S)
    =
    \|\mathbf{m}_v^T-\mathbf{m}_v^S\|_2^2
    +
    \|\sqrt{\boldsymbol{\sigma}_v^T+\epsilon}
    -\sqrt{\boldsymbol{\sigma}_v^S+\epsilon}\|_2^2 .
    \label{eq:diag-gaussian-w2}
\end{equation}
This reduces the per-node cost to $O(d)$. The diagonal covariance
follows standard practice in Gaussian-$\mathcal{W}_2$
distillation~\citep{chen2024wkd}; it avoids the $O(d^2)$ cost of full
covariances while retaining coordinate-wise dispersion. The surrogate
may be less accurate for multimodal or strongly heterophilic
neighborhoods, which we treat as a limitation. Since
$\mathbf{r}_u=[\mathbf{h}_u^\top,\mathbf{z}_u^\top]^\top$, the distance
in Eq.~\eqref{eq:diag-gaussian-w2} splits additively into a feature-side
term over $\mathbf{h}$ and a logit-side term over $\mathbf{z}$,
enabling the curvature-dependent weighting in
Section~\ref{sec:curv-metric}. We denote this objective
$\mathcal{L}_{\mathrm{diag}}$.

\subsection{Curvature-Adaptive Weighting}
\label{sec:curv-metric}

The remaining gap between $\mathcal{L}_{\mathrm{diag}}$ and
$\mathcal{L}^*$ is that $\mathcal{L}_{\mathrm{diag}}$ weights every node
uniformly, while $\mathcal{L}^*$ emphasizes high-energy teacher
directions. Section~\ref{sec:theory-curvature} suggests that these
directions are associated with different curvature regimes in the two
failure modes: low-curvature boundary regions are important for
recovering missing decision-relevant teacher directions, whereas
high-curvature interior regions are important for suppressing
unsupported student variation after teacher smoothing. We therefore use
ORC as a local proxy for allocating the alignment budget, rather than as
an exact estimator of kernel eigenvalues.

We instantiate this proxy with opposite monotone weights for the logit
and feature components:
\begin{equation}
    \alpha_v
    =
    \frac{\exp(-\gamma_z\widetilde{\kappa}_v)}{
    \frac{1}{|\mathcal{V}|}\sum_{u}\exp(-\gamma_z\widetilde{\kappa}_u)},
    \qquad
    \beta_v
    =
    \frac{\exp(+\gamma_h\widetilde{\kappa}_v)}{
    \frac{1}{|\mathcal{V}|}\sum_{u}\exp(+\gamma_h\widetilde{\kappa}_u)},
    \label{eq:curv-weights-method}
\end{equation}
where $\alpha_v$ upweights prediction-side alignment at low-curvature
boundary nodes and $\beta_v$ upweights representation-side alignment at
high-curvature interior nodes. The normalization enforces
$\frac{1}{|\mathcal{V}|}\sum_v\alpha_v=
\frac{1}{|\mathcal{V}|}\sum_v\beta_v=1$, so curvature redistributes
emphasis without changing the overall loss scale. Hyperparameters
$\gamma_z,\gamma_h>0$ control the strength of stratification.

Let $\widetilde{\nu}_{h,v}$ and $\widetilde{\nu}_{z,v}$ denote the
hidden and logit marginals of $\widetilde{\nu}_v$. The resulting
\emph{Curvature-Adaptive Wasserstein} objective is
\begin{equation}
    \mathcal{L}_{\mathrm{CAW}}
    =
    \frac{1}{|\mathcal{V}|}
    \sum_{v \in \mathcal{V}}
    \!\left[
    \beta_v\,
    \mathcal{W}_{2,\mathrm{diag}}^2\!\!
    \left(\widetilde{\nu}_{h,v}^T,\widetilde{\nu}_{h,v}^S\right)
    +
    \alpha_v\,
    \mathcal{W}_{2,\mathrm{diag}}^2\!\!
    \left(\widetilde{\nu}_{z,v}^T,\widetilde{\nu}_{z,v}^S\right)
    \right],
    \label{eq:caw-loss}
\end{equation}
which is computable in $O((|\mathcal{V}| + |\mathcal{E}|) \cdot d)$ per iteration.
This construction keeps the approximation aligned with the two regimes:
low-curvature nodes receive stronger prediction-level correction, while
high-curvature nodes receive stronger neighborhood representation
matching where the Gaussian moment surrogate is most appropriate.

\noindent\textbf{Objective Function.}
\label{sec:final-objective}
The full G$^2$MLP training objective is
\begin{equation}    \mathcal{L}
    =
    \lambda_{\mathrm{CE}}\mathcal{L}_{\mathrm{CE}}
    +
    \lambda_{\mathrm{KD}}\mathcal{L}_{\mathrm{KD}}
    +
    \lambda_{\mathrm{CAW}}
    \frac{\mathcal{L}_{\mathrm{CAW}}}{
    \operatorname{sg}(\mathcal{L}_{\mathrm{CAW}})+\epsilon_{\mathrm{sg}}},
    \label{eq:final-g2mlp}
\end{equation}
where $\mathcal{L}_{\mathrm{KD}}=
\frac{1}{|\mathcal{V}|}\sum_v
D_{\mathrm{KL}}(\hat{\mathbf{y}}_v^T\|\hat{\mathbf{y}}_v^S)$ and
$\mathcal{L}_{\mathrm{CE}}$ is cross-entropy on labeled nodes
$\mathcal{V}^L$, with $\lambda_{\mathrm{CE}}=0$ in all
node-classification and Graph Transformer experiments. The stop-gradient normalization rescales the CAW
gradient by a batch-level constant without changing its direction,
making its magnitude comparable to entropy-scaled losses. The KL term
preserves probability-simplex calibration, while $\mathcal{L}_{\mathrm{CAW}}$
aligns local teacher-student geometry.
\newpage
\section{Experiments}
\label{sec:experiments}
\subsection{Experimental Setup}
\label{sec:setup}
\noindent\textbf{Datasets.}
We evaluate on six node classification benchmarks: three citation
networks (Cora, Citeseer, Pubmed), two Amazon co-purchase graphs
(A-Computer, A-Photo)~\citep{shchur2018pitfalls}, and the large-scale
OGB graph ogbn-Arxiv~\citep{hu2020open}. The first five follow the
CPF setting~\citep{yang2021extract}; ogbn-Arxiv follows the standard
OGB split. Dataset statistics are given in Appendix~\ref{app:exp-details}.

\noindent\textbf{Settings.}
We follow the protocol of GLNN~\citep{zhang2022graphless}, evaluating the result in two regimes:
\emph{transductive}, where all nodes are visible during training but
test labels are hidden; and
\emph{inductive}, where a disjoint set of test nodes (and their
incident edges) is held out from training. The interpolated
\emph{production} score $\mathrm{prod} = 0.2\!\times\!\mathrm{ind} +
0.8\!\times\!\mathrm{tran}$~\citep{tian2022learning} summarizes a
realistic deployment mix of seen and unseen nodes.

\noindent\textbf{Baselines.}
We focus on the canonical GNN-to-MLP distillation setup: a
\textbf{frozen pretrained GNN teacher} distilled into an MLP student
that consumes \textbf{only the original node features} at training
and inference. The GNN teacher is GraphSAGE~\citep{hamilton2017inductive}
throughout, following GLNN, NOSMOG.
We compare against: vanilla MLP; GLNN~\citep{zhang2022graphless},
soft-label KL distillation; KRD~\citep{wu2023krd}, reliability-based
sample reweighting; FF-G2M~\citep{wu2023extracting}, low- and
high-frequency distillation. We do not compare against positional-encoding
augmented variants (e.g., NOSMOG~\citep{tian2022learning}), which
augment the student's input with precomputed structural features,
or codebook-augmented variants (e.g., VQGraph~\citep{yang2023vqgraph}),
which jointly train the teacher with auxiliary codebooks.
Both depart from setup, precluding fair comparison.

\noindent\textbf{Implementation.}
We use a 2-layer GraphSAGE teacher and a 2-layer MLP student with
matched hidden dimension, following standard practice~\citep{zhang2022graphless};
ogbn-Arxiv uses the OGB-standard 3-layer architecture for both teacher and student.
All models are trained with Adam~\citep{kingma2014adam} and early
stopping on validation accuracy. Student
learning rate, weight decay, and dropout follow the GLNN-released
configurations and are not tuned; our loss-balance hyperparameters are
set per dataset.
Hyperparameter ranges and per-dataset optimizer settings are in
Appendix~\ref{app:exp-details}. We report mean accuracy $\pm$ standard
deviation over 10 unique random seeds.

\subsection{Main Results}
\vspace{-1em}
\begin{table}[H]
\caption{Node classification accuracy (\%) under the \textbf{transductive}
setting. Best student is in \textbf{bold}.}
\label{tab:transductive}
\centering
\small
\resizebox{\textwidth}{!}{%
\begin{tabular}{@{}l l |llll l@{}}
\toprule
& Teacher & \multicolumn{5}{c}{Graph-free students} \\
\cmidrule(lr){2-7}
Dataset & SAGE & MLP & GLNN & KRD & FF-G2M & \textbf{G$^2$MLP} \\
\midrule
Citeseer   & 70.49$\pm$1.53 & 58.50$\pm$1.86 & 71.22$\pm$1.50 & 72.84$\pm$1.70 & 72.85$\pm$1.59 & \textbf{74.51$\pm$2.35} \\
Pubmed     & 75.56$\pm$2.06 & 68.39$\pm$3.09 & 75.59$\pm$2.46 & 77.01$\pm$3.11 & 76.56$\pm$3.41 & \textbf{78.17$\pm$2.75} \\
Cora       & 80.64$\pm$1.57 & 59.18$\pm$1.60 & 80.26$\pm$1.66 & 82.27$\pm$1.31 & 82.38$\pm$1.41 & \textbf{82.54$\pm$1.87} \\
A-computer & 82.82$\pm$1.37 & 67.62$\pm$2.21 & 82.71$\pm$1.18 & 82.87$\pm$0.87 & 83.67$\pm$1.04 & \textbf{83.87$\pm$1.09} \\
A-photo    & 90.85$\pm$0.87 & 77.29$\pm$1.79 & 91.95$\pm$1.04 & 91.95$\pm$1.44 & 92.51$\pm$0.56 & \textbf{92.85$\pm$0.78} \\
Arxiv      & 70.73$\pm$0.35 & 55.67$\pm$0.24 & 63.75$\pm$0.48 & 59.26$\pm$0.51 & 58.51$\pm$0.35 & \textbf{64.57$\pm$0.31} \\
\bottomrule
\end{tabular}
}
\end{table}

\vspace{-1em}
\noindent\textbf{Transductive Setting.}
\label{sec:transductive}
G$^2$MLP achieves the best graph-free accuracy on all six datasets,
improving over the strongest baseline (FF-G2M, KRD, or GLNN depending
on the dataset) by 0.16 to 1.66 pp. G$^2$MLP further surpasses the
GraphSAGE teacher on five of the six datasets, with gains up to
4.02 pp on Citeseer; only on ogbn-Arxiv does the student fall short
of the teacher (-6.16 pp), reflecting the known difficulty of
feature-only inference on this large-scale graph~\citep{zhang2022graphless}.
Component ablation (Appendix~\ref{app:ablation}) confirms that gains arise from curvature stratification.

\begin{table}[t]
\caption{Node classification accuracy (\%) under the \textbf{production} setting.}
\label{tab:inductive}
\centering
\small
\resizebox{\textwidth}{!}{%
\begin{tabular}{@{}ll l | llll l@{}}
\toprule
& & Teacher & \multicolumn{5}{c}{Graph-free students} \\
\cmidrule(lr){3-8}
Dataset & Eval & SAGE & MLP & GLNN & KRD & FF-G2M & \textbf{G$^2$MLP} \\
\midrule
\multirow{3}{*}{Citeseer}
& prod & 68.06 & 58.49 & 69.08 & 71.38 & \textbf{71.89} & 71.67 \\
& ind  & 69.14$\pm$2.99 & 59.31$\pm$4.56 & 68.48$\pm$2.38 & 69.78$\pm$3.04 & 69.75$\pm$3.16 & \textbf{71.85$\pm$2.27} \\
& tran & 67.79$\pm$2.80 & 58.29$\pm$1.94 & 69.23$\pm$2.39 & 71.77$\pm$2.81 & \textbf{72.12$\pm$2.69} & 71.62$\pm$1.31 \\
\midrule
\multirow{3}{*}{Pubmed}
& prod & 74.77 & 68.39 & 74.67 & 76.00 & 73.98 & \textbf{77.45} \\
& ind  & 75.07$\pm$2.89 & 68.28$\pm$3.25 & 74.52$\pm$2.95 & 75.17$\pm$3.11 & 73.49$\pm$7.91 & \textbf{77.30$\pm$1.48} \\
& tran & 74.70$\pm$2.33 & 68.42$\pm$3.06 & 74.70$\pm$2.75 & 76.20$\pm$3.00 & 74.10$\pm$7.78 & \textbf{77.49$\pm$1.25} \\
\midrule
\multirow{3}{*}{Cora}
& prod & 79.53 & 59.18 & 77.82 & 75.74 & 78.60 & \textbf{79.73} \\
& ind  & 81.03$\pm$1.71 & 59.44$\pm$3.36 & 73.21$\pm$1.50 & 70.26$\pm$1.94 & 72.02$\pm$1.43 & \textbf{75.90$\pm$1.87} \\
& tran & 79.16$\pm$1.60 & 59.12$\pm$1.49 & 78.97$\pm$1.56 & 77.11$\pm$1.44 & 80.01$\pm$1.41 & \textbf{80.69$\pm$0.73} \\
\midrule
\multirow{3}{*}{A-computer}
& prod & 82.73 & 67.62 & 82.10 & 81.17 & 82.69 & \textbf{84.29} \\
& ind  & 82.83$\pm$1.51 & 67.69$\pm$2.20 & 80.27$\pm$2.11 & 79.15$\pm$1.82 & 80.52$\pm$1.56 & \textbf{82.67$\pm$0.93} \\
& tran & 82.70$\pm$1.34 & 67.60$\pm$2.23 & 82.56$\pm$1.80 & 81.67$\pm$1.92 & 83.23$\pm$1.36 & \textbf{84.69$\pm$0.85} \\
\midrule
\multirow{3}{*}{A-photo}
& prod & 90.45 & 77.29 & 91.34 & 91.84 & 92.35 & \textbf{92.51} \\
& ind  & 90.56$\pm$1.47 & 77.44$\pm$1.50 & 89.50$\pm$1.12 & 90.04$\pm$1.12 & \textbf{90.70$\pm$0.76} & 90.61$\pm$0.50 \\
& tran & 90.42$\pm$0.68 & 77.25$\pm$1.90 & 91.80$\pm$0.49 & 92.29$\pm$0.63 & 92.77$\pm$0.24 & \textbf{92.98$\pm$0.51} \\
\midrule
\multirow{3}{*}{Arxiv}
& prod & 70.69 & 55.35 & 63.50 & 59.32 & 59.60 & \textbf{63.90} \\
& ind  & 70.69$\pm$0.58 & 55.29$\pm$0.63 & 59.04$\pm$0.46 & 57.32$\pm$0.31 & 57.02$\pm$0.43 & \textbf{59.82$\pm$0.42} \\
& tran & 70.69$\pm$0.39 & 55.36$\pm$0.34 & 64.61$\pm$0.15 & 59.82$\pm$0.27 & 60.24$\pm$0.23 & \textbf{64.92$\pm$0.26} \\
\bottomrule
\end{tabular}%
}
\vspace{-1em}
\end{table}

\noindent\textbf{Inductive and Production Settings.}
\label{sec:inductive}
G$^2$MLP achieves the best graph-free production accuracy on 5/6
datasets, with gains of 0.16--1.60 pp over the strongest baseline.
The advantage is most pronounced in the inductive regime, where
G$^2$MLP improves by 2.07--2.69 pp on Cora, Citeseer, Pubmed, and
A-computer---substantially larger than under transductive evaluation,
suggesting curvature-stratified alignment transfers better to unseen
nodes than nodewise distillation. On A-computer (ind), G$^2$MLP
reaches 82.67\%, essentially matching the SAGE teacher (82.83\%)
without graph access; on Citeseer and Pubmed (prod), G$^2$MLP
surpasses the teacher by 3.61 and 2.68 pp respectively.

\subsection{GT-to-MLP Distillation}
\label{sec:gt2mlp}
Graph Transformers (GTs) incur quadratic attention cost at inference,
making distillation to MLP students particularly attractive. We
evaluate three GT teachers: GT~\citep{dwivedi2020generalization}
with local masked attention, GraphGPS~\citep{rampavsek2022recipe}
with hybrid MPNN+global attention, and NAGphormer~\citep{chen2023nagphormer}
with per-node hop-token attention. Experiments use the same five
datasets as Section~\ref{sec:transductive} except ogbn-Arxiv, which
exceeds GraphGPS dense-attention memory. The student is the same
MLP used throughout.

\begin{table*}[h]
\caption{Transformer-teacher to MLP distillation. $T$: teacher;
$S$: GLNN student; $S^{\dagger}$: G$^2$MLP student. \textbf{Bold}:
student value $\geq$ teacher. \underline{Underline}: $S^{\dagger} > S$.\\}
\vspace{-1em}
\label{tab:gt2mlp}
\centering
\small
\setlength{\tabcolsep}{4pt}
\resizebox{\textwidth}{!}{%
\begin{tabular}{@{}l lll lll lll@{}}
\toprule
& \multicolumn{3}{c}{GT} & \multicolumn{3}{c}{GraphGPS} & \multicolumn{3}{c}{NAGphormer} \\
\cmidrule(lr){2-4} \cmidrule(lr){5-7} \cmidrule(lr){8-10}
Dataset & $T$ & $S$ & $S^{\dagger}$ & $T$ & $S$ & $S^{\dagger}$ & $T$ & $S$ & $S^{\dagger}$ \\
\midrule
Citeseer   & 71.16$\pm$2.31 & \textbf{71.85$\pm$2.45} & \textbf{\underline{72.35$\pm$1.85}} & 68.07$\pm$2.62 & \textbf{69.38$\pm$2.30} & \textbf{\underline{69.68$\pm$2.05}} & 70.09$\pm$1.81 & \textbf{71.24$\pm$1.64} & \textbf{\underline{71.82$\pm$1.29}} \\
Pubmed     & 74.09$\pm$2.48 & \textbf{74.61$\pm$2.48} & \textbf{\underline{74.76$\pm$2.83}} & 71.77$\pm$2.81 & 71.68$\pm$2.61 & \textbf{\underline{72.03$\pm$2.58}} & 76.19$\pm$2.86 & \textbf{76.85$\pm$3.34} & \textbf{\underline{77.06$\pm$3.49}} \\
Cora       & 78.53$\pm$2.08 & \textbf{78.98$\pm$1.90} & \textbf{\underline{79.21$\pm$2.09}} & 76.21$\pm$1.91 & 76.06$\pm$1.99 & \textbf{\underline{76.77$\pm$1.97}} & 80.88$\pm$1.77 & \textbf{81.00$\pm$1.45} & \textbf{\underline{81.48$\pm$1.53}} \\
A-computer & 83.89$\pm$1.59 & \textbf{84.53$\pm$1.42} & \textbf{\underline{84.72$\pm$1.32}} & 78.30$\pm$1.70 & \textbf{78.99$\pm$1.73} & \textbf{\underline{80.12$\pm$1.55}} & 82.78$\pm$1.92 & \textbf{83.02$\pm$1.87} & \textbf{\underline{83.31$\pm$1.77}} \\
A-photo    & 90.94$\pm$0.87 & \textbf{91.90$\pm$0.56} & \textbf{\underline{92.06$\pm$0.82}} & 90.41$\pm$1.10 & \textbf{91.03$\pm$0.95} & \textbf{\underline{91.32$\pm$0.93}} & 91.45$\pm$0.82 & \textbf{92.07$\pm$0.74} & \textbf{\underline{92.49$\pm$0.65}} \\
\bottomrule
\end{tabular}%
}
\end{table*}

\noindent\textbf{Results.}
The MLP student meets or exceeds its transformer teacher on $13/15$
cells ($S \geq T$: GT $5/5$, GraphGPS $3/5$, NAGphormer $5/5$), and
G$^2$MLP further improves over the vanilla-KL GLNN baseline on every
cell ($S^{\dagger} > S$ on $15/15$) without changing the student
architecture or inference cost. Detailed teacher configurations,
the $S \geq T$ reversal mechanism, per-cell hyperparameter patterns,
and GraphGPS dense-vs-sparse comparison are in
Appendix~\ref{app:gt-experiments}.
\vspace{-1em}

\subsection{Extension to Link Prediction}
\label{sec:lp}
So far we have evaluated the GNN-to-MLP distillation paradigm on node
classification. To probe whether the framework generalizes beyond
label-fitting, we test it on \emph{link prediction} (LP), where
supervision lives on edges and the student must produce embeddings
whose inner product reproduces the teacher's edge ranking. We follow
the standard 85/5/10 \texttt{RandomLinkSplit} protocol with 1:1 negative
sampling, score edges by $s(u,v) = \langle \ve_u, \ve_v \rangle$, train
all models for up to 500 epochs with best-validation-AUC checkpointing, and
report mean$\pm$std of test AUC / AP over 10 seeds. The student is
trained with both an embedding-MSE distillation term and an edge-BCE
supervision term (the \emph{multi-task} regime; ablating either is
deferred to Appendix~\ref{app:lp-ablation}).

\begin{table}[h]
\caption{Link prediction (multi-task regime). SAGE row: teacher reference.
\textbf{Bold}: best graph-free student per column. \underline{Underline}:
exceeds the teacher.}
\label{tab:lp-multi}
\centering
\small
\resizebox{\textwidth}{!}{%
\begin{tabular}{@{}l ll ll ll@{}}
\toprule
& \multicolumn{2}{c}{Cora} & \multicolumn{2}{c}{Citeseer} & \multicolumn{2}{c}{Pubmed} \\
\cmidrule(lr){2-3} \cmidrule(lr){4-5} \cmidrule(lr){6-7}
Method & AUC & AP & AUC & AP & AUC & AP \\
\midrule
SAGE (teacher, ref) & 81.29$\pm$1.99 & 80.54$\pm$2.69 & 81.21$\pm$3.15 & 81.80$\pm$3.87 & 88.36$\pm$1.65 & 86.83$\pm$1.54 \\
\midrule
MLP                 & 77.92$\pm$1.78 & 76.23$\pm$2.01 & 76.47$\pm$2.08 & 76.13$\pm$2.74 & \underline{90.29$\pm$0.38} & \underline{89.34$\pm$0.44} \\
GLNN                & 78.61$\pm$1.68 & 77.05$\pm$1.76 & \underline{83.32$\pm$2.34} & \underline{83.71$\pm$2.53} & \underline{90.03$\pm$0.38} & \underline{89.12$\pm$0.32} \\
KRD                 & 79.20$\pm$1.65 & 77.89$\pm$2.31 & \underline{83.31$\pm$2.23} & \underline{83.81$\pm$2.42} & \underline{90.32$\pm$0.42} & \underline{89.38$\pm$0.51} \\
\textbf{G$^2$MLP (Ours)} & \textbf{\underline{83.90$\pm$1.10}} & \textbf{\underline{83.36$\pm$0.99}} & \textbf{\underline{83.62$\pm$2.33}} & \textbf{\underline{84.32$\pm$2.43}} & \textbf{\underline{90.36$\pm$1.29}} & \textbf{\underline{89.96$\pm$1.07}} \\
\bottomrule
\end{tabular}%
}
\vspace{-1em}
\end{table}

\noindent\textbf{Result.}
G$^2$MLP wins on all three datasets in both AUC and AP, with the
largest gain on Cora (+4.70 AUC over KRD, +5.47 AP) and narrow margins
on Citeseer (+0.30 / +0.51) and Pubmed (+0.04 / +0.58). All three
G$^2$MLP rows exceed the SAGE teacher (Cora +2.61, Citeseer +2.41,
Pubmed +2.00 AUC). On Cora and Citeseer, where the plain MLP stays
below the teacher, this shows that the graph-free student preserves
and extends the teacher's edge-ranking knowledge without graph access
at inference. The plain MLP is competitive only where node features
alone carry the LP signal (Pubmed); on Citeseer it loses 7 AUC to
GLNN, confirming that the embedding-distillation term contributes
structural information the edge-BCE term alone cannot recover. The
largest gain over KRD concentrates on Cora; on Citeseer and Pubmed,
where KRD already exceeds the teacher, the margin over KRD is smaller
but remains positive in both AUC and AP.

\subsection{Inference Efficiency}
\label{sec:efficiency}
\begin{wrapfigure}{r}{0.5\textwidth}
\vspace{-1.8em}
\centering
\includegraphics[width=\linewidth]{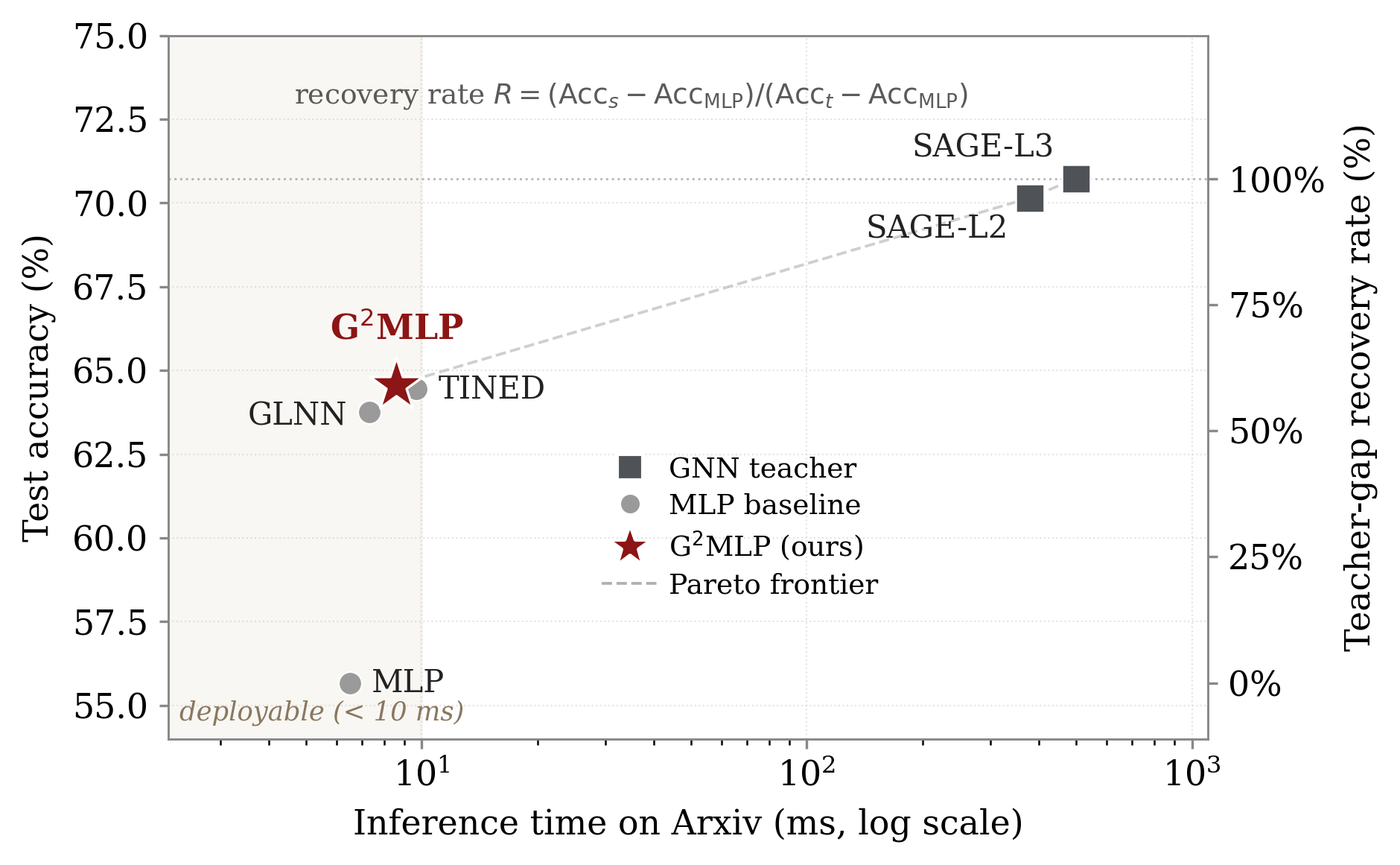}
\caption{Inference time on Arxiv ($\sim$170K nodes).}
\label{fig:efficiency}
\vspace{-1.5em}
\end{wrapfigure}

G$^2$MLP shares the same MLP architecture as GLNN: curvature
reweighting is applied entirely at training time, so deployment
reduces to a feature-vector $\to$ logit forward pass with no
neighborhood fetching, sampling, or graph traversal. As
Figure~\ref{fig:efficiency} shows, this yields a $58\times$ speedup
over the 3-layer GraphSAGE teacher under standard layer-wise full
inference (8.6 vs.\ 499.7 ms). In accuracy,
G$^2$MLP recovers $59.1\%$ of the teacher's gain over a vanilla
MLP, ahead of GLNN ($53.7\%$, same student architecture) and matching
the non-canonical TINED~\citep{zhou2025tined} ($58.2\%$, with a deeper
student), placing G$^2$MLP on the speed--accuracy
Pareto frontier alongside the GNN teachers.

\noindent\textbf{Curvature precomputation cost.}
Exact Ollivier--Ricci curvature requires solving
$O(|\mathcal{E}| \cdot \bar{d}^3)$ optimal-transport problems, where
$\bar{d}$ is the average degree. Precomputation is feasible for
small and medium graphs but scales poorly to very large graphs.
For graphs where exact ORC becomes prohibitive, two well-studied
alternatives apply: Sinkhorn-regularized optimal
transport~\citep{cuturi2013sinkhorn} reduces per-edge cost to
$O(\bar{d}^2)$ while preserving the ORC formulation, and
Forman--Ricci curvature~\citep{sreejith2016forman}, a combinatorial
$O(1)$ surrogate, can substitute for ORC in our weighting scheme
since both encode local connectivity strength similarly on
homophilic graphs. Curvature is computed once per graph as
preprocessing, so this overhead amortizes across all training runs
and the deployment.

\section{Conclusion}

We presented Graph Geometry-aware MLP (G$^2$MLP), a distillation
framework that intervenes against \emph{spectral misalignment}: the
failure mode in which uniform KD either misses class-discriminative
directions at low-curvature boundary nodes (spectral underfit) or retains
spurious diversity at high-curvature interior nodes (spectral overfit).
G$^2$MLP combines curvature-stratified reweighting with
graph Wasserstein alignment and modifies only training. Across multiple node
classification benchmarks, G$^2$MLP closes the embedding rank gap in
both regimes and consistently outperforms graph-free baselines while
preserving the inference speed of a vanilla MLP. The same recipe
transfers to Graph Transformer teachers and to link prediction without modification.

\bibliographystyle{plainnat}
\bibliography{references}

\begin{thebibliography}{42}
\providecommand{\natexlab}[1]{#1}
\providecommand{\url}[1]{\texttt{#1}}
\expandafter\ifx\csname urlstyle\endcsname\relax
  \providecommand{\doi}[1]{doi: #1}\else
  \providecommand{\doi}{doi: \begingroup \urlstyle{rm}\Url}\fi

\bibitem[Chen et~al.(2023)Chen, Gao, Li, and He]{chen2023nagphormer}
Jinsong Chen, Kaiyuan Gao, Gaichao Li, and Kun He.
\newblock {NAGphormer}: A tokenized graph transformer for node classification in large graphs.
\newblock In \emph{International Conference on Learning Representations (ICLR)}, 2023.

\bibitem[Chen et~al.(2021)Chen, Wang, Gan, Liu, Henao, and Carin]{chen2021wcord}
Liqun Chen, Dong Wang, Zhe Gan, Jingjing Liu, Ricardo Henao, and Lawrence Carin.
\newblock Wasserstein contrastive representation distillation.
\newblock In \emph{CVPR}, pages 16296--16305, 2021.

\bibitem[Cho and Saul(2009)]{cho2009kernel}
Youngmin Cho and Lawrence Saul.
\newblock Kernel methods for deep learning.
\newblock \emph{Advances in neural information processing systems}, 22, 2009.

\bibitem[Cuturi(2013)]{cuturi2013sinkhorn}
Marco Cuturi.
\newblock Sinkhorn distances: Lightspeed computation of optimal transport.
\newblock In \emph{NeurIPS}, 2013.

\bibitem[Dong et~al.(2025)Dong, Li, Zheng, and Kong]{bibal2024msn}
Rui Dong, Jiaxing Li, Weihuang Zheng, and Youyong Kong.
\newblock Suit the node pair to the case: A multi-scale node pair grouping strategy for graph-mlp distillation.
\newblock In \emph{Proceedings of the Thirty-Fourth International Joint Conference on Artificial Intelligence}, pages 2775--2783, 2025.

\bibitem[Du et~al.(2019)Du, Hou, Salakhutdinov, Poczos, Wang, and Xu]{du2019graph}
Simon~S Du, Kangcheng Hou, Russ~R Salakhutdinov, Barnabas Poczos, Ruosong Wang, and Keyulu Xu.
\newblock Graph neural tangent kernel: Fusing graph neural networks with graph kernels.
\newblock \emph{Advances in neural information processing systems}, 32, 2019.

\bibitem[Dwivedi and Bresson(2020)]{dwivedi2020generalization}
Vijay~Prakash Dwivedi and Xavier Bresson.
\newblock A generalization of transformer networks to graphs.
\newblock \emph{CoRR}, abs/2012.09699, 2020.
\newblock URL \url{https://arxiv.org/abs/2012.09699}.

\bibitem[Furlanello et~al.(2018)Furlanello, Lipton, Tschannen, Itti, and Anandkumar]{furlanello2018born}
Tommaso Furlanello, Zachary Lipton, Michael Tschannen, Laurent Itti, and Anima Anandkumar.
\newblock Born again neural networks.
\newblock In \emph{International conference on machine learning}, pages 1607--1616. PMLR, 2018.

\bibitem[Gilmer et~al.(2017)Gilmer, Schoenholz, Riley, Vinyals, and Dahl]{gilmer2017neural}
Justin Gilmer, Samuel~S Schoenholz, Patrick~F Riley, Oriol Vinyals, and George~E Dahl.
\newblock Neural message passing for quantum chemistry.
\newblock In \emph{ICML}, 2017.

\bibitem[Guo et~al.(2023)Guo, Shiao, Zhang, Liu, Chawla, Shah, and Zhao]{guo2023linkless}
Zhichun Guo, William Shiao, Shichang Zhang, Yozen Liu, Nitesh~V Chawla, Neil Shah, and Tong Zhao.
\newblock Linkless link prediction via relational distillation.
\newblock In \emph{ICML}, 2023.

\bibitem[Hamilton et~al.(2017)Hamilton, Ying, and Leskovec]{hamilton2017inductive}
Will Hamilton, Zhitao Ying, and Jure Leskovec.
\newblock Inductive representation learning on large graphs.
\newblock In \emph{NeurIPS}, 2017.

\bibitem[Hevapathige et~al.(2025)Hevapathige, Zehmakan, and Wang]{hevapathige2025depth}
Asela Hevapathige, Ahad~N Zehmakan, and Qing Wang.
\newblock Depth-adaptive graph neural networks via learnable {Bakry--\'Emery} curvature.
\newblock In \emph{KDD}, 2025.

\bibitem[Hinton et~al.(2015)Hinton, Vinyals, and Dean]{hinton2015distilling}
Geoffrey Hinton, Oriol Vinyals, and Jeff Dean.
\newblock Distilling the knowledge in a neural network.
\newblock \emph{arXiv}, 2015.

\bibitem[Hu et~al.(2020)Hu, Fey, Zitnik, Dong, Ren, Liu, Catasta, and Leskovec]{hu2020open}
Weihua Hu, Matthias Fey, Marinka Zitnik, Yuxiao Dong, Hongyu Ren, Bowen Liu, Michele Catasta, and Jure Leskovec.
\newblock Open graph benchmark: Datasets for machine learning on graphs.
\newblock In \emph{NeurIPS}, 2020.

\bibitem[Hu et~al.(2021)Hu, You, Wang, Wang, Zhou, and Gao]{hu2021graph}
Yang Hu, Haoxuan You, Zhecan Wang, Zhicheng Wang, Erjin Zhou, and Yue Gao.
\newblock Graph-mlp: Node classification without message passing in graph.
\newblock \emph{arXiv}, 2021.

\bibitem[Jacot et~al.(2018)Jacot, Gabriel, and Hongler]{jacot2018neural}
Arthur Jacot, Franck Gabriel, and Cl{\'e}ment Hongler.
\newblock Neural tangent kernel: Convergence and generalization in neural networks.
\newblock \emph{Advances in neural information processing systems}, 31, 2018.

\bibitem[Kingma and Ba(2015)]{kingma2014adam}
Diederik~P. Kingma and Jimmy Ba.
\newblock Adam: A method for stochastic optimization.
\newblock In \emph{International Conference on Learning Representations (ICLR)}, 2015.

\bibitem[Liu et~al.(2023)Liu, Zhou, Pan, Wu, Li, Chen, and Zhang]{liu2023curvdrop}
Yang Liu, Chuan Zhou, Shirui Pan, Jia Wu, Zhao Li, Hongyang Chen, and Peng Zhang.
\newblock {CurvDrop}: A {Ricci} curvature based approach to prevent graph neural networks from over-smoothing and over-squashing.
\newblock In \emph{Proceedings of the ACM Web Conference 2023}, pages 221--230, 2023.

\bibitem[Lohit and Jones(2022)]{lohit2022emdipot}
Suhas Lohit and Michael Jones.
\newblock Model compression using optimal transport.
\newblock In \emph{WACV}, pages 2764--2773, 2022.

\bibitem[Lu et~al.(2024)Lu, Guan, Zhao, and Yang]{lu2024adagmlp}
Weigang Lu, Ziyu Guan, Wei Zhao, and Yaming Yang.
\newblock {AdaGMLP}: {AdaBoosting} {GNN}-to-{MLP} knowledge distillation.
\newblock In \emph{Proceedings of the 30th ACM SIGKDD Conference on Knowledge Discovery and Data Mining (KDD '24)}, pages 2060--2071, 2024.
\newblock \doi{10.1145/3637528.3671699}.

\bibitem[Lv et~al.(2024)Lv, Yang, and Li]{chen2024wkd}
Jiaming Lv, Haoyuan Yang, and Peihua Li.
\newblock Wasserstein distance rivals {Kullback-Leibler} divergence for knowledge distillation.
\newblock In \emph{NeurIPS}, 2024.

\bibitem[Nguyen et~al.(2023)Nguyen, Hieu, Nguyen, Ho, Osher, and Nguyen]{nguyen2023revisiting}
Khang Nguyen, Nong~Minh Hieu, Vinh~Duc Nguyen, Nhat Ho, Stanley Osher, and Tan~Minh Nguyen.
\newblock Revisiting over-smoothing and over-squashing using {Ollivier--Ricci} curvature.
\newblock In \emph{ICML}, pages 25956--25979, 2023.

\bibitem[Ni et~al.(2019)Ni, Lin, Luo, and Gao]{ni2019community}
Chien-Chun Ni, Yu-Yao Lin, Feng Luo, and Jie Gao.
\newblock Community detection on networks with {R}icci flow.
\newblock \emph{Scientific Reports}, 9\penalty0 (1):\penalty0 1--12, 2019.

\bibitem[Ollivier(2009)]{ollivier2009ricci}
Yann Ollivier.
\newblock Ricci curvature of {Markov} chains on metric spaces.
\newblock \emph{Journal of Functional Analysis}, 256\penalty0 (3):\penalty0 810--864, 2009.

\bibitem[Peyr{\'e} and Cuturi(2019)]{peyre2019computational}
Gabriel Peyr{\'e} and Marco Cuturi.
\newblock Computational optimal transport with applications to data sciences.
\newblock \emph{Foundations and Trends in Machine Learning}, 11\penalty0 (5-6):\penalty0 355--607, 2019.

\bibitem[Ramp{\'{a}}{\v{s}}ek et~al.(2022)Ramp{\'{a}}{\v{s}}ek, Galkin, Dwivedi, Luu, Wolf, and Beaini]{rampavsek2022recipe}
Ladislav Ramp{\'{a}}{\v{s}}ek, Michael Galkin, Vijay~Prakash Dwivedi, Anh~Tuan Luu, Guy Wolf, and Dominique Beaini.
\newblock Recipe for a general, powerful, scalable graph transformer.
\newblock In \emph{Advances in Neural Information Processing Systems 35}, 2022.

\bibitem[Shchur et~al.(2018)Shchur, Mumme, Bojchevski, and G{\"u}nnemann]{shchur2018pitfalls}
Oleksandr Shchur, Maximilian Mumme, Aleksandar Bojchevski, and Stephan G{\"u}nnemann.
\newblock Pitfalls of graph neural network evaluation.
\newblock \emph{arXiv}, 2018.

\bibitem[Sia et~al.(2019)Sia, Jonckheere, and Bogdan]{sia2019ollivier}
Jayson Sia, Edmond Jonckheere, and Paul Bogdan.
\newblock {Ollivier--Ricci} curvature-based method to community detection in complex networks.
\newblock \emph{Scientific Reports}, 9\penalty0 (1):\penalty0 1--12, 2019.

\bibitem[Sreejith et~al.(2016)Sreejith, Mohanraj, Jost, Saucan, and Samal]{sreejith2016forman}
RP~Sreejith, Karthikeyan Mohanraj, J{\"u}rgen Jost, Emil Saucan, and Areejit Samal.
\newblock Forman curvature for complex networks.
\newblock \emph{Journal of Statistical Mechanics}, 2016.

\bibitem[Tian et~al.(2023)Tian, Zhang, Guo, Zhang, and Chawla]{tian2022learning}
Yijun Tian, Chuxu Zhang, Zhichun Guo, Xiangliang Zhang, and Nitesh Chawla.
\newblock Learning {MLPs} on graphs: A unified view of effectiveness, robustness, and efficiency.
\newblock In \emph{ICLR}, 2023.

\bibitem[Topping et~al.(2022)Topping, Di~Giovanni, Chamberlain, Dong, and Bronstein]{topping2022understanding}
Jake Topping, Francesco Di~Giovanni, Benjamin~Paul Chamberlain, Xiaowen Dong, and Michael~M Bronstein.
\newblock Understanding over-squashing and bottlenecks on graphs via curvature.
\newblock In \emph{ICLR}, 2022.

\bibitem[Wang et~al.(2021)Wang, Lyu, Li, Xia, Yang, Wang, Wang, Cui, Yang, Sun, and Guo]{wang2021apan}
Xuhong Wang, Ding Lyu, Mengjian Li, Yang Xia, Qi~Yang, Xinwen Wang, Xinguang Wang, Ping Cui, Yupu Yang, Bowen Sun, and Zhenyu Guo.
\newblock {APAN}: Asynchronous propagation attention network for real-time temporal graph embedding.
\newblock In \emph{Proceedings of the 2021 International Conference on Management of Data}, pages 2628--2638, 2021.

\bibitem[Wang et~al.(2023)Wang, Hooi, Liu, and Shah]{wang2023graph}
Yiwei Wang, Bryan Hooi, Yozen Liu, and Neil Shah.
\newblock Graph explicit neural networks: Explicitly encoding graphs for efficient and accurate inference.
\newblock In \emph{WSDM}, 2023.

\bibitem[Wang et~al.(2025)Wang, Zhang, Zhang, and Ye]{yang2024simmlp}
Zehong Wang, Zheyuan Zhang, Chuxu Zhang, and Yanfang Ye.
\newblock Training mlps on graphs without supervision.
\newblock In \emph{Proceedings of the Eighteenth ACM International Conference on Web Search and Data Mining}, pages 697--706, 2025.

\bibitem[Wu et~al.(2023{\natexlab{a}})Wu, Lin, Huang, Fan, and Li]{wu2023extracting}
Lirong Wu, Haitao Lin, Yufei Huang, Tianyu Fan, and Stan~Z Li.
\newblock Extracting low-/high-frequency knowledge from graph neural networks and injecting it into mlps: An effective gnn-to-mlp distillation framework.
\newblock In \emph{Proceedings of the AAAI Conference on Artificial Intelligence}, volume~37, pages 10351--10360, 2023{\natexlab{a}}.

\bibitem[Wu et~al.(2023{\natexlab{b}})Wu, Lin, Huang, and Li]{wu2023krd}
Lirong Wu, Haitao Lin, Yufei Huang, and Stan~Z. Li.
\newblock Quantifying the knowledge in {GNN}s for reliable distillation into {MLP}s.
\newblock In \emph{ICML}, 2023{\natexlab{b}}.

\bibitem[Wu et~al.(2024)Wu, Liu, Lin, Huang, and Li]{wu2024hgmd}
Lirong Wu, Yunfan Liu, Haitao Lin, Yufei Huang, and Stan~Z. Li.
\newblock Teach harder, learn poorer: Rethinking hard sample distillation for {GNN}-to-{MLP} knowledge distillation.
\newblock In \emph{CIKM}, pages 2554--2563, 2024.

\bibitem[Yang et~al.(2021)Yang, Liu, and Shi]{yang2021extract}
Cheng Yang, Jiawei Liu, and Chuan Shi.
\newblock Extract the knowledge of graph neural networks and go beyond it: An effective knowledge distillation framework.
\newblock In \emph{Proceedings of the web conference 2021}, pages 1227--1237, 2021.

\bibitem[Yang et~al.(2024)Yang, Tian, Xu, Liu, Hong, Qu, Zhang, CUI, Zhang, and Leskovec]{yang2023vqgraph}
Ling Yang, Ye~Tian, Minkai Xu, Zhongyi Liu, Shenda Hong, Wei Qu, Wentao Zhang, Bin CUI, Muhan Zhang, and Jure Leskovec.
\newblock {VQG}raph: Rethinking graph representation space for bridging {GNN}s and {MLP}s.
\newblock In \emph{ICLR}, 2024.

\bibitem[Yang et~al.(2016)Yang, Cohen, and Salakhutdinov]{yang2016revisiting}
Zhilin Yang, William Cohen, and Ruslan Salakhutdinov.
\newblock Revisiting semi-supervised learning with graph embeddings.
\newblock In \emph{ICML}, 2016.

\bibitem[Zhang et~al.(2022)Zhang, Liu, Sun, and Shah]{zhang2022graphless}
Shichang Zhang, Yozen Liu, Yizhou Sun, and Neil Shah.
\newblock Graph-less neural networks: Teaching old {MLP}s new tricks via distillation.
\newblock In \emph{ICLR}, 2022.

\bibitem[Zhou et~al.(2025)Zhou, Ding, Shi, Li, and Shen]{zhou2025tined}
Ziang Zhou, Zhihao Ding, Jieming Shi, Qing Li, and Shiqi Shen.
\newblock Tined: Gnns-to-mlps by teacher injection and dirichlet energy distillation.
\newblock In \emph{International Conference on Machine Learning}, pages 78616--78632. PMLR, 2025.

\end{thebibliography}

\appendix

\section{Notation and Preliminaries}
\label{app:notation}

\paragraph{Graph and feature notation.}
We use throughout:
\begin{itemize}[leftmargin=*,topsep=2pt,itemsep=0pt]
\item $\mathcal{G} = (\mathcal{V}, \mathcal{E})$ undirected; $d_\mathcal{G}$
the shortest-path metric on $\mathcal{V}$.
\item For $w \in \mathcal{V}$: feature $\mathbf{x}_w \in \mathbb{R}^F$,
hidden state $\mathbf{h}_w \in \mathbb{R}^H$, pre-softmax logits
$\mathbf{z}_w \in \mathbb{R}^C$, and degree $|\mathcal{N}(w)|$.
\item Closed $k$-hop neighborhood:
$\mathcal{N}_k^+(v) := \{w \in \mathcal{V} : d_\mathcal{G}(v, w) \le k\}$,
with $\mathcal{N}^+(v) := \mathcal{N}_1^+(v) = \{v\} \cup \mathcal{N}(v)$.
\item Lazy random-walk distribution with laziness $\alpha = 1/2$:
$\mu_v := \tfrac{1}{2}\delta_v + \tfrac{1}{2|\mathcal{N}(v)|}\sum_{u \in \mathcal{N}(v)} \delta_u$.
\item Mean aggregation (under any feature map $\phi$):
$\bar{\mathbf{x}}_v := \mathbb{E}_{u \sim \mu_v}[\phi(\mathbf{x}_u)]$.
\item Ollivier--Ricci curvature on edge $(u, v) \in \mathcal{E}$:
$\kappa(u, v) := 1 - W_1^{d_\mathcal{G}}(\mu_u, \mu_v)$.
\end{itemize}

\paragraph{Kantorovich--Rubinstein duality.}
For any probability measures $\nu_1, \nu_2$ on $(\mathcal{V}, d_\mathcal{G})$,
\begin{equation}
W_1^{d_\mathcal{G}}(\nu_1, \nu_2)
\;=\;
\sup_{\mathrm{Lip}_{d_\mathcal{G}}(f) \le 1}
\bigl[\mathbb{E}_{\nu_2}[f] - \mathbb{E}_{\nu_1}[f]\bigr],
\label{eq:kr}
\end{equation}
where $\mathrm{Lip}_{d_\mathcal{G}}(f) := \sup_{a \neq b}|f(a) - f(b)|/d_\mathcal{G}(a, b)$.

\paragraph{Standing assumption.}
Throughout the proofs we assume \textbf{(A1)}: the feature map
$w \mapsto \mathbf{x}_w$ (or, more generally, a Lipschitz embedding $\phi$
of node features) is $L$-Lipschitz with respect to $d_\mathcal{G}$:
$\|\phi(\mathbf{x}_a) - \phi(\mathbf{x}_b)\|_2 \le L \cdot d_\mathcal{G}(a, b)$
for all $a, b \in \mathcal{V}$.

\section{Full GNTK Recursion and Proofs}
\label{app:gntk}

\subsection{Full GNTK Recursion}
\label{app:gntk-recursion}

For an infinitely wide $K$-layer GNN, the full GNTK recursion
following~\citet{du2019graph} is as follows. Initialize
$\Sigma^{(0)}(v,u) = \mathbf{x}_v^\top \mathbf{x}_u / F$ and
$\Theta^{(0)}(v,u) = \Sigma^{(0)}(v,u)$. For each layer
$k = 1, \ldots, K$:
\begin{align}
    \Sigma^{(k)}(v,u)
    &=
    \frac{1}{|\mathcal{N}^+(v)||\mathcal{N}^+(u)|}
    \sum_{v' \in \mathcal{N}^+(v)}
    \sum_{u' \in \mathcal{N}^+(u)}
    \dot{\Sigma}^{(k-1)}(v',u'), \label{eq:gntk-sigma-rec} \\
    \Theta^{(k)}(v,u)
    &=
    \Sigma^{(k)}(v,u)
    +
    \dot{\Sigma}^{(k)}(v,u) \cdot \Theta^{(k-1)}(v,u),
    \label{eq:gntk-theta-rec}
\end{align}
where
$\dot{\Sigma}^{(k)}(v,u) =
\mathbb{E}_{f \sim \mathcal{N}(0,\Sigma^{(k)})}
[\sigma'(f(v))\sigma'(f(u))]$
is the arc-cosine derivative kernel~\citep{cho2009kernel}.
The MLP NTK $\mathbf{K}^{\mathrm{MLP}}$ is the special case in which the
double sum in Eq.~\eqref{eq:gntk-sigma-rec} is replaced by the identity, so
$\mathbf{K}^{\mathrm{MLP}}(v,u)$ depends only on
$\mathbf{x}_v, \mathbf{x}_u$.

\subsection{Proof of Lemma~\ref{lem:locality} (Aggregation Locality)}
\label{app:proof-locality}

We make precise the structural claim of Lemma~\ref{lem:locality}: the
GNTK entry $\Theta^{(K)}(v,u)$ depends on node features only through the
$K$-hop closed neighborhoods of $v$ and $u$, while the MLP NTK
$\mathbf{K}^{\mathrm{MLP}}(v,u)$ depends only on $\mathbf{x}_v$ and
$\mathbf{x}_u$ themselves. Their difference $\Delta^{(K)}(v,u) :=
\Theta^{(K)}(v,u) - \mathbf{K}^{\mathrm{MLP}}(v,u)$ is therefore the
graph-induced contribution generated by paths in the $K$-hop receptive
fields of $v$ and $u$. The kernel need not be sparse, since the
input covariance $\Sigma^{(0)}$ is generically nonzero for any pair.

\begin{proof}
We prove the support claim by induction on the layer index $k$:
\begin{equation*}
\Sigma^{(k)}(v,u) \text{ and } \Theta^{(k)}(v,u)
\text{ depend on features only through }
\{\mathbf{x}_w : w \in \mathcal{N}_k^+(v) \cup \mathcal{N}_k^+(u)\}.
\end{equation*}

\textbf{Base case ($k=0$).}
$\Sigma^{(0)}(v,u) = \mathbf{x}_v^\top \mathbf{x}_u / F$ is a function of
$\mathbf{x}_v$ and $\mathbf{x}_u$ alone, i.e.\ of features in
$\mathcal{N}_0^+(v) \cup \mathcal{N}_0^+(u) = \{v, u\}$. The same holds
for $\Theta^{(0)} = \Sigma^{(0)}$.

\textbf{Inductive step.}
Suppose the claim holds at layer $k-1$. Fix any pair $(v, u)$. By the
recursion Eq.~\eqref{eq:gntk-sigma-rec},
\begin{equation*}
\Sigma^{(k)}(v,u)
\;=\;
\frac{1}{|\mathcal{N}^+(v)||\mathcal{N}^+(u)|}
\sum_{v' \in \mathcal{N}^+(v)}
\sum_{u' \in \mathcal{N}^+(u)}
\dot{\Sigma}^{(k-1)}(v', u').
\end{equation*}
The arc-cosine derivative kernel $\dot{\Sigma}^{(k-1)}(v', u')$ is a
deterministic function of the three entries
$\Sigma^{(k-1)}(v', v')$, $\Sigma^{(k-1)}(v', u')$, and
$\Sigma^{(k-1)}(u', u')$~\citep{cho2009kernel}. By the inductive
hypothesis, each of these depends on features only through
$\mathcal{N}_{k-1}^+(v') \cup \mathcal{N}_{k-1}^+(u')$. Since
$v' \in \mathcal{N}^+(v)$ implies $d_\mathcal{G}(v, v') \le 1$, the
triangle inequality gives
$\mathcal{N}_{k-1}^+(v') \subseteq \mathcal{N}_k^+(v)$, and analogously
$\mathcal{N}_{k-1}^+(u') \subseteq \mathcal{N}_k^+(u)$. Summing over
$(v', u') \in \mathcal{N}^+(v) \times \mathcal{N}^+(u)$ thus yields a
function of features in $\mathcal{N}_k^+(v) \cup \mathcal{N}_k^+(u)$,
proving the claim for $\Sigma^{(k)}(v,u)$.

For $\Theta^{(k)}(v,u)$, the recursion Eq.~\eqref{eq:gntk-theta-rec}
combines $\Sigma^{(k)}(v,u)$, $\dot{\Sigma}^{(k)}(v,u)$, and
$\Theta^{(k-1)}(v,u)$. The first two depend on features in
$\mathcal{N}_k^+(v) \cup \mathcal{N}_k^+(u)$ by the argument above; the
third depends, by the inductive hypothesis, on features in
$\mathcal{N}_{k-1}^+(v) \cup \mathcal{N}_{k-1}^+(u)
\subseteq \mathcal{N}_k^+(v) \cup \mathcal{N}_k^+(u)$.

\textbf{MLP comparison.}
Replacing the double sum in Eq.~\eqref{eq:gntk-sigma-rec} by the identity
gives the MLP recursion, in which $\Sigma^{(k)}_{\mathrm{MLP}}(v,u)$
depends only on the previous-layer entry at the same pair $(v,u)$.
Iterating yields $\mathbf{K}^{\mathrm{MLP}}(v,u) =
\Theta^{(K)}_{\mathrm{MLP}}(v,u)$ as a function of
$\mathbf{x}_v, \mathbf{x}_u$ alone.

The Lemma follows on identifying $\Theta^{(K)}(v,u) =
[\mathbf{K}^{(K)}]_{vu}$ and $\Delta^{(K)}(v,u) = \Theta^{(K)}(v,u) -
\mathbf{K}^{\mathrm{MLP}}(v,u)$: $\Delta^{(K)}$ is determined by features
in $\mathcal{N}_K^+(v) \cup \mathcal{N}_K^+(u)$ but not, in general, by
those at $\{v, u\}$ alone --- unlike $\mathbf{K}^{\mathrm{MLP}}$.
\qedhere
\end{proof}

\subsection{Proof of Proposition~\ref{prop:mismatch} (ORC as a Local
Mismatch Proxy)}
\label{app:proof-mismatch}

\begin{proof}
Fix an edge $(i,j) \in \mathcal{E}$. By definition,
$\bar{\mathbf{x}}_i = \mathbb{E}_{X \sim \mu_i}[\phi(\mathbf{x}_X)]$ and
$\bar{\mathbf{x}}_j = \mathbb{E}_{Y \sim \mu_j}[\phi(\mathbf{x}_Y)]$.

\textbf{Step 1 (Coupling lift).}
Let $\pi$ be any coupling of $(\mu_i, \mu_j)$, i.e.\ a joint
distribution on $\mathcal{V} \times \mathcal{V}$ with marginals $\mu_i$
and $\mu_j$. Linearity of expectation gives
\begin{equation*}
\bar{\mathbf{x}}_i - \bar{\mathbf{x}}_j
\;=\;
\mathbb{E}_{(X, Y) \sim \pi}\!\left[\phi(\mathbf{x}_X) - \phi(\mathbf{x}_Y)\right].
\end{equation*}

\textbf{Step 2 (Jensen + Lipschitz).}
Applying the triangle inequality for the Bochner integral (Jensen for
the convex norm $\|\cdot\|_2$),
\begin{equation*}
\|\bar{\mathbf{x}}_i - \bar{\mathbf{x}}_j\|_2
\;\le\;
\mathbb{E}_{(X, Y) \sim \pi}\!\left[\|\phi(\mathbf{x}_X) - \phi(\mathbf{x}_Y)\|_2\right]
\;\le\;
L \cdot \mathbb{E}_{(X, Y) \sim \pi}\!\left[d_\mathcal{G}(X, Y)\right],
\end{equation*}
where the second inequality uses (A1).

\textbf{Step 3 (Wasserstein and ORC substitution).}
Taking the infimum over couplings $\pi \in \Pi(\mu_i, \mu_j)$:
\begin{equation*}
\|\bar{\mathbf{x}}_i - \bar{\mathbf{x}}_j\|_2
\;\le\;
L \cdot \inf_{\pi \in \Pi(\mu_i, \mu_j)}
\mathbb{E}_\pi[d_\mathcal{G}(X, Y)]
\;=\;
L \cdot W_1^{d_\mathcal{G}}(\mu_i, \mu_j).
\end{equation*}
Substituting ORC definition $W_1^{d_\mathcal{G}}(\mu_i, \mu_j) =
1 - \kappa(i, j)$ for adjacent $(i, j)$ yields the stated bound.
\qedhere
\end{proof}

\section{Derivation of the Multi-Granularity Decomposition}
\label{app:derivation}

We derive the closed-form decomposition that justifies separating the
joint Gaussian $\mathcal{W}_2^2$ alignment in
Eq.~\eqref{eq:diag-gaussian-w2} into independent feature-side and
logit-side terms, and that motivates the curvature-stratified
weighting Eq.~\eqref{eq:curv-weights-method}.

\paragraph{Joint Gaussian model.}
For each node $v$ and neighbor $u \in \mathcal{N}^+(v)$, stack the
hidden representation and pre-softmax logits:
$\mathbf{r}_u = (\mathbf{h}_u^\top, \mathbf{z}_u^\top)^\top \in
\mathbb{R}^{H + C}$. Fit a Gaussian
$\mathcal{N}(\boldsymbol{\mu}_v, \boldsymbol{\Sigma}_v)$ to
$\{\mathbf{r}_u : u \in \mathcal{N}^+(v)\}$ from empirical moments,
with block-form covariance
\begin{equation}
\boldsymbol{\Sigma}_v
\;=\;
\begin{pmatrix}
\boldsymbol{\Sigma}_v^{hh} & \boldsymbol{\Sigma}_v^{hz} \\
\boldsymbol{\Sigma}_v^{zh} & \boldsymbol{\Sigma}_v^{zz}
\end{pmatrix}.
\label{eq:block-cov}
\end{equation}

\paragraph{Assumption D1 (Negligible cross-covariance).}
We model $\boldsymbol{\Sigma}_v^{hz} \approx \mathbf{0}$. This is a
modeling simplification rather than an empirical fact: the logits are
computed from the hidden representations, so the exact
cross-covariance is not zero in general. The simplification is consistent with the
diagonal surrogate of Eq.~\eqref{eq:diag-gaussian-w2}, which drops all
off-diagonal entries, and it is what allows the two blocks to be
weighted separately.

\paragraph{Assumption D2 (Diagonal within blocks).}
Following WKD~\citep{chen2024wkd}, we assume $\boldsymbol{\Sigma}_v^{hh}$
and $\boldsymbol{\Sigma}_v^{zz}$ are diagonal. This reduces the
$(H+C) \times (H+C)$ matrix square root from cost $O((H+C)^3)$ per
node to coordinate-wise square roots in cost $O(H + C)$.

\paragraph{Decomposition.}
Under (D1)+(D2), the closed-form $\mathcal{W}_2^2$ between two joint
Gaussians on $\mathbb{R}^{H + C}$~\citep{chen2024wkd} factorizes
additively along the hidden / logit blocks:
\begin{equation}
\mathcal{W}_2^2\!\left(\mathcal{N}_T(v),\, \mathcal{N}_S(v)\right)
\;=\;
\underbrace{\mathcal{W}_2^2\!\left(\mathcal{N}_T^h(v),\, \mathcal{N}_S^h(v)\right)}
_{\text{feature-side}}
\;+\;
\underbrace{\mathcal{W}_2^2\!\left(\mathcal{N}_T^z(v),\, \mathcal{N}_S^z(v)\right)}
_{\text{logit-side}},
\label{eq:w2-decomp}
\end{equation}
where each diagonal-Gaussian term reduces to
$\|\boldsymbol{\mu}_T - \boldsymbol{\mu}_S\|_2^2 +
\|\sqrt{\boldsymbol{\sigma}_T} - \sqrt{\boldsymbol{\sigma}_S}\|_2^2$.
This is the additive structure exploited in
Eq.~\eqref{eq:diag-gaussian-w2} of the main body and consumed in the
curvature-adaptive objective Eq.~\eqref{eq:caw-loss}.

\paragraph{Implementation note.}
Decomposition Eq.~\eqref{eq:w2-decomp} separates the two blocks but does
not by itself prescribe their relative weighting. The curvature-adaptive
weights $\alpha_v, \beta_v$ in
Eq.~\eqref{eq:curv-weights-method} apply a sign-opposite stratification
of the two terms ($\beta_v \propto e^{+\gamma_h \widetilde{\kappa}_v}$
emphasizes feature-side alignment at high-curvature interior nodes;
$\alpha_v \propto e^{-\gamma_z \widetilde{\kappa}_v}$ emphasizes
logit-side alignment at low-curvature boundary nodes), mirroring the
two failure-mode regimes characterized in
Section~\ref{sec:theory-curvature}. The sign convention is therefore fixed by the theoretical analysis for node classification rather than tuned, and the only
curvature-temperature hyperparameters introduced here are
$\gamma_h, \gamma_z$.

\section{Training Algorithm}
\label{app:algorithm}

Algorithm~\ref{alg:dgd} summarizes the two-stage G$^2$MLP training pipeline.
Stage 1 trains a standard GNN teacher and precomputes node-level
Ollivier--Ricci curvature once as a graph-level preprocessing step.
Stage 2 trains the MLP student under the curvature-adaptive objective.
At inference time, only the student MLP is deployed; no graph access,
neighborhood fetching, or sampling is required.

\begin{algorithm}[h]
\caption{G$^2$MLP Training Pipeline}
\label{alg:dgd}
\begin{algorithmic}[1]
\REQUIRE Graph $\mathcal{G} = (\mathcal{V}, \mathcal{E}, \mathbf{X})$,
teacher GNN $T$, student MLP $S_\phi$,
joint-loss weights $\lambda_{\mathrm{CE}}, \lambda_{\mathrm{KD}}, \lambda_{\mathrm{CAW}}$,
curvature temperatures $\gamma_h, \gamma_z$
\STATE \textbf{Stage 1: Teacher pre-training and graph preprocessing}
\STATE Train teacher $T$ with cross-entropy until validation convergence
\STATE Precompute node-level ORC $\{\kappa_v\}_{v \in \mathcal{V}}$
       (once, with lazy-walk parameter $\alpha = 0.5$; cached to disk)
\STATE Cache teacher soft labels $\{\hat{\mathbf{y}}_v^T\}$ and
       hidden representations $\{\mathbf{h}_v^T\}$
\STATE \textbf{Stage 2: Student training}
\FOR{epoch $= 1, \ldots, E$}
    \STATE Forward student: $\mathbf{h}_v^S = S_\phi(\mathbf{x}_v)$,
           $\mathbf{z}_v^S = $ classifier head$(\mathbf{h}_v^S)$
    \STATE Compute curvature-adaptive weights:
           $\beta_v \propto e^{+\gamma_h \widetilde{\kappa}_v}$,
           $\alpha_v \propto e^{-\gamma_z \widetilde{\kappa}_v}$
           (Eq.~\eqref{eq:curv-weights-method})
    \STATE Compute curvature-adaptive Wasserstein loss
           $\mathcal{L}_{\mathrm{CAW}}$ (Eq.~\eqref{eq:caw-loss})
    \STATE Compute full objective $\mathcal{L} = \lambda_{\mathrm{CE}}\mathcal{L}_{\mathrm{CE}}
           + \lambda_{\mathrm{KD}} \mathcal{L}_{\mathrm{KD}}
           + \lambda_{\mathrm{CAW}} \cdot \mathcal{L}_{\mathrm{CAW}} /
           (\operatorname{sg}(\mathcal{L}_{\mathrm{CAW}}) + \epsilon_{\mathrm{sg}})$
           (Eq.~\eqref{eq:final-g2mlp})
    \STATE Update $\phi$ by minimizing $\mathcal{L}$ with Adam
\ENDFOR
\STATE \textbf{Inference:} deploy $S_\phi$ only (no graph access required)
\end{algorithmic}
\end{algorithm}

The stop-gradient normalization $\operatorname{sg}(\mathcal{L}_{\mathrm{CAW}})
+ \epsilon_{\mathrm{sg}}$ in line~10 keeps the CAW gradient direction
unchanged but rescales its magnitude to match entropy-scaled losses,
stabilizing the loss scale across datasets with different feature
norms and reducing the sensitivity of $\lambda_{\mathrm{CAW}}$
selection. The constant $\epsilon_{\mathrm{sg}} = 10^{-8}$ is fixed
across all experiments.

\section{Datasets, Baselines, and Implementation Details}
\label{app:exp-details}

\subsection{Datasets and Splits}
\label{app:splits}

We evaluate on the six node-classification benchmarks summarized in
Table~\ref{tab:dataset-stats}. Statistics for the citation networks
(Cora, Citeseer, Pubmed) follow~\citet{yang2016revisiting}; the
Amazon co-purchase graphs (A-computer, A-photo)
follow~\citet{shchur2018pitfalls}; the OGB graph follows the standard
release of~\citet{hu2020open}. The homophily ratio $h$ is the edge-level
fraction $h = |\{(u,v) \in \mathcal{E} : y_u = y_v\}| / |\mathcal{E}|$.

\begin{table}[h]
\centering
\caption{Benchmark statistics of the full graphs as published.
$\bar{d}$ is the mean degree; $h$ is edge homophily. Following CPF,
node-classification experiments on the five CPF datasets use the
largest connected component (Cora 2{,}485, Citeseer 2{,}110, Pubmed
19{,}717, A-computer 13{,}381, A-photo 7{,}487 nodes).}
\label{tab:dataset-stats}
\small
\begin{tabular}{@{}l rrrr cc@{}}
\toprule
Dataset & \#Nodes & \#Edges & \#Feat. & \#Cls. & $\bar{d}$ & $h$ \\
\midrule
Cora           &   2{,}708 &    5{,}429 & 1{,}433 &  7 &  3.9 & 0.81 \\
Citeseer       &   3{,}327 &    4{,}732 & 3{,}703 &  6 &  2.7 & 0.74 \\
Pubmed         &  19{,}717 &   44{,}338 &      500 &  3 &  4.5 & 0.80 \\
A-computer     &  13{,}752 &  245{,}861 &      767 & 10 & 35.8 & 0.78 \\
A-photo        &   7{,}650 &  119{,}081 &      745 &  8 & 31.1 & 0.83 \\
ogbn-arxiv     & 169{,}343 & 1{,}166{,}243 & 128 & 40 & 13.7 & 0.65 \\
\bottomrule
\end{tabular}
\end{table}

The benchmarks differ in their curvature profiles. Under exact ORC
with lazy-walk parameter $\alpha = 0.5$, the mean edge curvature is
negative on all five CPF graphs: most strongly on the citation graphs
Pubmed ($-0.29$) and Cora ($-0.17$), and closest to zero on the dense
co-purchase graph A-photo ($-0.02$, with about half of its edges
positively curved); Citeseer ($-0.09$) and A-computer ($-0.08$) lie in
between.

\paragraph{Splits and evaluation protocol.}
For Cora, Citeseer, Pubmed, A-computer, and A-photo, we use the
\textbf{CPF} split protocol of~\citet{yang2021extract}: 10 random splits
per dataset with the standard CPF train/validation/test ratios; we
report mean$\pm$std over the 10 splits. For ogbn-arxiv, we use the
canonical OGB time-based split~\citep{hu2020open} (training on papers
published until 2017, validation on 2018, and test on 2019 onward) and vary 10 seeds for
model initialization and dropout.

\subsection{Baselines}
\label{app:baselines}

We compare against the canonical GNN-to-MLP distillation baselines
grouped by inference-time graph access. For all baselines we use the
standard student configurations released by the original authors and
refrain from inference-time tricks such as enlarged hidden dimensions
or increased depth.

\textbf{Vanilla baseline.}
\emph{MLP}: a 2-layer MLP (3-layer on ogbn-arxiv) with hidden dimension
matched to the teacher
and identical optimization (no distillation signal).

\textbf{Graph-free distillation students.}
\emph{GLNN}~\citep{zhang2022graphless}: soft-label KL distillation with
no architectural change to the student.
\emph{KRD}~\citep{wu2023krd}: adds reliability-based per-node sample
weighting on top of GLNN; reliability $r_v = 1 - \delta_v^p$ is
estimated from the teacher's noise sensitivity following the released
protocol.
\emph{FF-G2M}~\citep{wu2023extracting}: distillation of the low- and
high-frequency components of the teacher's knowledge.

\textbf{Excluded.}
We do \emph{not} include positional-encoding (PE) augmented variants
such as NOSMOG. PE augmentation departs from the standard graph-free
distillation setup, introduces additional preprocessing cost, and
modifies the model's input representation in ways that make a
like-for-like architectural comparison infeasible. Stacking G$^2$MLP
with PE-style augmentations is orthogonal to our contribution and is
left to future work.

The teacher is a GraphSAGE network with the GCN aggregator (2 layers;
3 layers on ogbn-arxiv) in all
non-transformer experiments, with the same hidden dimension as the
student.

\subsection{G$^2$MLP Implementation and Hyperparameters}
\label{app:g2mlp-hparams}

\paragraph{Teacher and student architectures.}
The GraphSAGE teacher uses the GCN aggregator. On Cora, Citeseer,
Pubmed, A-computer, and A-photo it has 2 layers (learning rate
$10^{-2}$, weight-decay $5{\times}10^{-4}$, dropout 0, fan-out $5{,}5$
for neighbor sampling); on ogbn-arxiv it follows the OGB-standard
3-layer configuration (hidden dimension 256, dropout 0.2, weight-decay
0, batch normalization, fan-out $5{,}10{,}15$). It is trained for 200
epochs with early stopping on validation accuracy (patience 50). The
MLP student has the same depth and hidden dimension as the teacher
(2 layers and 128 on the five CPF datasets, 3 layers and 256 on
ogbn-arxiv);
it is trained for 500 epochs with Adam under the combined objective in
Eq.~\eqref{eq:final-g2mlp}, with early stopping on validation accuracy
(patience 50). Per-dataset student optimizer settings (learning rate
$\eta$, weight-decay, dropout) follow the GLNN-released configurations
(Table~\ref{tab:student-optim}); these are held fixed across all our
G$^2$MLP runs and are not retuned.

\begin{table}[H]
\caption{Student MLP optimizer hyperparameters.}
\label{tab:student-optim}
\centering
\small
\begin{tabular}{@{}l ccc@{}}
\toprule
Dataset & $\eta$ & weight-decay & dropout \\
\midrule
Cora        & $10^{-2}$        & $5{\times}10^{-3}$ & 0.6 \\
Citeseer    & $10^{-2}$        & $10^{-3}$          & 0.1 \\
Pubmed      & $5{\times}10^{-3}$ & 0                 & 0.4 \\
A-computer  & $10^{-3}$        & $2{\times}10^{-3}$ & 0.3 \\
A-photo     & $5{\times}10^{-3}$ & $2{\times}10^{-3}$ & 0.3 \\
ogbn-arxiv  & $10^{-2}$        & 0                  & 0.2 \\
\bottomrule
\end{tabular}
\end{table}

\paragraph{Hyperparameter search.}
The only G$^2$MLP-specific knobs we sweep are
\begin{align*}
\lambda_{\mathrm{CAW}} &\in \{10^{-2},\, 10^{-1},\, 5{\times}10^{-1}\}, \\
\textsc{caw\_transform} &\in \{\textsc{none},\, \textsc{student},\, \textsc{both}\},
\end{align*}
set per dataset. All
remaining knobs are fixed across all datasets:
$\lambda_{\mathrm{KD}} = 1$, $\lambda_{\mathrm{CE}} = 0$ (pure
distillation), $\gamma_h = \gamma_z = 1$, temperature $\tau = 1$. ORC
is computed with lazy-walk parameter $\alpha = 0.5$, and the
detach-normalized $L_{\mathrm{CAW}}$ uses
$\epsilon_{\mathrm{sg}} = 10^{-8}$.

\paragraph{Per-dataset implementation note.}
We additionally apply a small learnable
projection (\texttt{Linear} + \texttt{ReLU}, hidden-dim preserving)
on the student-side $\mathbf{h}_v^S$ before the W$^2$ term, jointly
trained with the student. This is an architectural detail that absorbs
teacher-student feature-scale mismatch on amazon-style graphs and is
not part of the $L_{\mathrm{CAW}}$ formulation; it can be safely
omitted on the Planetoid graphs (Cora, Citeseer, Pubmed). The exact
configuration per dataset will be released with the code.
\subsection{Component Ablation}
\label{app:ablation}

We ablate G$^2$MLP along two axes: (i)~$W_2$ alignment vs.\ KL
distillation under uniform weights, and (ii)~which side of the
representation is curvature-stratified. All variants share the same
MLP architecture, optimizer, and 10-seed protocol as the main
results.

\begin{table}[h]
\caption{Component ablation on transductive node classification.
Baseline reproduced from Table~\ref{tab:transductive}.}
\label{tab:ablation}
\centering
\small
\begin{tabular}{@{}lccc@{}}
\toprule
Variant & Cora & Citeseer & Pubmed \\
\midrule
GLNN baseline                       & 80.26$\pm$1.66 & 71.22$\pm$1.50 & 75.59$\pm$2.46 \\
$+\;W_2$ uniform (no curvature)     & 80.86$\pm$2.30 & 71.07$\pm$2.19 & 75.22$\pm$2.15 \\
$+\;$curvature, logit-side only     & 81.78$\pm$1.55 & 73.87$\pm$2.09 & 77.27$\pm$2.92 \\
$+\;$curvature, feature-side only   & 81.34$\pm$1.88 & 73.99$\pm$2.02 & 77.14$\pm$2.93 \\
\textbf{Full G$^2$MLP (both sides)} & \textbf{82.54$\pm$1.87} & \textbf{74.51$\pm$2.35} & \textbf{78.17$\pm$2.75} \\
\bottomrule
\end{tabular}
\end{table}

$W_2$ alignment alone is neutral relative to GLNN; single-side
curvature stratification yields most of the improvement
($+1.5$ to $+2.8$ over GLNN); the two sides are
complementary, validating the sign-opposite stratification predicted
by Section~\ref{sec:theory-curvature}.

\paragraph{ORC computation.}
We use the \texttt{GraphRicciCurvature}
library~\citep{ni2019community} to compute exact Ollivier--Ricci
curvature on all benchmarks except ogbn-arxiv,
where we use an approximate variant from the same library. ORC is
computed once per graph as preprocessing and cached.

\subsection{Compute Environment}
\label{app:compute}

\paragraph{Hardware.}
All node-classification, Graph Transformer, and link-prediction
experiments run on a single NVIDIA GeForce RTX~3090 (24\,GB GDDR6X,
Ampere) on a shared SLURM cluster, with an Intel Xeon Gold~6136
host (3.0\,GHz, dual-socket) and 256\,GB system RAM. The 24\,GB GPU
memory is sufficient for full-batch inference on all six benchmarks
(no neighborhood sampling is required for ogbn-arxiv at our hidden
dimensions).

\paragraph{Software.}
Ubuntu~24.04 LTS (kernel 6.8); CUDA~12.4 / cuDNN~9.1; Python~3.10.20;
PyTorch~2.6.0; DGL~2.5.0. The
\texttt{GraphRicciCurvature}~\citep{ni2019community} library is used
for ORC precomputation.

\paragraph{Wall-clock cost.}
Total wall-clock cost across all reported runs is approximately:
$\approx$80 GPU-hours for the node-classification experiments
(6 datasets $\times$ 2 settings $\times$ 5 methods $\times$ 10 seeds);
$\approx$40 GPU-hours for the Graph Transformer experiments (see
Appendix~\ref{app:gt-experiments}); $\approx$15 GPU-hours for the
link-prediction experiments (see Appendix~\ref{app:lp-ablation}).
Inference-latency measurements (Figure~\ref{fig:efficiency}) follow the
standard layer-wise full-inference protocol on the same hardware.

\section{Graph Transformer Teacher Experiments}
\label{app:gt-experiments}

This appendix consolidates implementation and analysis details for the
Graph Transformer (GT) to MLP distillation experiments in
Section~\ref{sec:gt2mlp}. All experiments use the hardware and software
stack described in Appendix~\ref{app:compute}.

\subsection{GT Architectures and Configurations}
\label{app:gt-arch}

\paragraph{Teachers.}
We evaluate three GT architectures as teachers, matching the columns of
Table~\ref{tab:gt2mlp} in the main body: vanilla
\texttt{GT}~\citep{dwivedi2020generalization},
\texttt{GraphGPS}~\citep{rampavsek2022recipe}, and
\texttt{NAGphormer}~\citep{chen2023nagphormer}. Each teacher is trained
for 500 epochs with Adam (lr $5\times 10^{-4}$, weight decay
$5\times 10^{-4}$). Per-dataset hidden dimensions and head counts follow
each method's released defaults; exact configurations will be released with
the code. The GraphGPS
teacher uses dense $N \times N$ attention as its canonical default; we
additionally analyze a sparse edge-softmax variant in
Appendix~\ref{app:gps-sparse-vs-dense}.

\paragraph{Students and hyperparameter selection.}
Across all GT teachers, the student is a 2-layer MLP with hidden
dimension matched to the teacher. For each (teacher, dataset) cell we
tune the CAW weight $\lambda_{\mathrm{CAW}}$ and the learning rate on
the validation split, with grids matched to the main experiments
($\lambda_{\mathrm{CAW}} \in \{0.1, 0.5, 1.0, 2.0, 5.0\}$,
lr $\in \{1\mathrm{e}{-3}, 5\mathrm{e}{-3}\}$). Per-cell selected
values are released with the code. We observe no systematic interaction
between $\lambda_{\mathrm{CAW}}$ and teacher choice: the validation-best
$\lambda_{\mathrm{CAW}}$ depends primarily on graph size, with smaller
graphs favoring larger values ($\sim 2.0$) and the denser
A-computer/A-photo graphs favoring smaller values ($\sim 0.5$).
\newpage

\subsection{Why Students Can Exceed Their GT Teachers}
\label{app:gt-reversal}

Table~\ref{tab:gt2mlp} shows that the MLP student meets or exceeds its
GT teacher on $13/15$ cells. This $S \geq T$ reversal arises because
GTs at the dataset sizes considered (2K--20K nodes) are difficult to
train well: the small label budget (140 / 120 / 60 nodes on
Cora / Citeseer / Pubmed) provides limited supervision for $O(N^2)$
attention parameters, so randomly initialized full attention often
produces near-uniform patterns that the 500-epoch training schedule
does not escape. Distillation acts as a regularizer in this regime: the
student inherits the teacher's prediction targets while shedding the
teacher's overfit attention noise, with the curvature-adaptive
weighting in G$^2$MLP further focusing the alignment budget on
neighborhoods where the teacher's geometry is reliable. The same
phenomenon has been observed for ``born-again'' distillation in vision
under analogous data-scarce regimes~\citep{furlanello2018born}; our
contribution is showing that it persists with GT teachers and that the
gain extends to MLP-only inference.

\subsection{Why NAGphormer Is Often the Strongest GT Teacher}
\label{app:why-nagphormer}

NAGphormer is the strongest teacher on $3/5$ benchmarks (Cora, Pubmed,
A-photo; vanilla GT leads on Citeseer and A-computer) because its hop-token
embedding produces a permutation-invariant summary at every hop
distance, which (i) regularizes the attention pattern when the training
signal is small, and (ii) injects graph structure into the token
sequence rather than relying on attention alone to recover it.

\subsection{GraphGPS: Sparse vs.\ Dense Attention}
\label{app:gps-sparse-vs-dense}

GraphGPS canonically uses dense $N \times N$ attention; sparse
edge-softmax is a common alternative that restricts attention to graph
edges. On the dataset sizes considered ($N \in [2{,}000, 20{,}000]$),
both variants fit comfortably in 24\,GB GPU memory and converge in
500 epochs. The dense vs.\ sparse comparison in
Table~\ref{tab:gps-comparison} is therefore not a memory tradeoff but
an inductive-bias one: edge-softmax acts as an explicit locality prior,
while dense attention must learn locality from scratch.

\begin{table*}[h]
\centering
\caption{G$^2$MLP vs.\ GLNN with the GraphGPS teacher under both
canonical dense ($N \times N$) and sparse (edge-softmax) attention.
Conventions as in Table~\ref{tab:gt2mlp}: $T$ teacher,
$S$ GLNN student, $S^{\dagger}$ G$^2$MLP student. \textbf{Bold}: student value $\geq$ teacher.
\underline{Underline}: $S^{\dagger} > S$.}
\label{tab:gps-comparison}
\small
\begin{tabular}{@{}l ccc c ccc@{}}
\toprule
& \multicolumn{3}{c}{GraphGPS (dense, canonical)}
& & \multicolumn{3}{c}{GraphGPS (sparse, edge-softmax)} \\
\cmidrule(lr){2-4} \cmidrule(lr){6-8}
Dataset & $T$ & $S$ & $S^{\dagger}$ & & $T$ & $S$ & $S^{\dagger}$ \\
\midrule
Cora       & 76.21$\pm$1.91 & 76.06$\pm$1.99 & \textbf{\underline{76.77$\pm$1.97}} & & 78.12$\pm$2.12 & 77.86$\pm$2.09 & \textbf{\underline{78.74$\pm$1.87}} \\
Citeseer   & 68.07$\pm$2.62 & \textbf{69.38$\pm$2.30} & \textbf{\underline{69.68$\pm$2.05}} & & 68.88$\pm$2.25 & \textbf{70.29$\pm$2.13} & \textbf{\underline{70.44$\pm$2.13}} \\
Pubmed     & 71.77$\pm$2.81 & 71.68$\pm$2.61 & \textbf{\underline{72.03$\pm$2.58}} & & 75.61$\pm$2.13 & \textbf{76.08$\pm$2.34} & \textbf{75.97$\pm$2.21} \\
A-computer & 78.30$\pm$1.70 & \textbf{78.99$\pm$1.73} & \textbf{\underline{80.12$\pm$1.55}} & & 81.84$\pm$0.88 & \textbf{82.05$\pm$1.18} & \textbf{\underline{82.55$\pm$0.86}} \\
A-photo    & 90.41$\pm$1.10 & \textbf{91.03$\pm$0.95} & \textbf{\underline{91.32$\pm$0.93}} & & 90.34$\pm$1.21 & \textbf{91.12$\pm$1.04} & \textbf{\underline{91.26$\pm$1.02}} \\
\bottomrule
\end{tabular}
\end{table*}

\paragraph{Discussion.}
The dense GraphGPS teacher is intrinsically weaker than vanilla GT or
NAGphormer on small citation graphs: with random Q/K initialization,
the softmax over all $N$ nodes produces near-uniform attention, and the
small label budget (140 / 120 / 60 nodes on Cora / Citeseer / Pubmed)
plus 500-epoch schedule does not escape this basin --- the MPNN branch
of GraphGPS carries most of the model's capacity. Even so, with the
dense teacher G$^2$MLP improves over GLNN and surpasses the teacher on
all five datasets; with the sparse teacher it surpasses the teacher on
all five and improves over GLNN on four (Pubmed: $-0.11$\,pp). The sparse variant is a stronger teacher
than dense on four of the five graphs (e.g.\ A-computer $T = 81.84$
vs.\ $78.30$; Cora $T = 78.12$ vs.\ $76.21$; A-photo is a tie,
$90.34$ vs.\ $90.41$), because the edge-restriction
acts as a built-in locality prior that random-init dense attention
lacks at small scale. The G$^2$MLP gains over GLNN hold in nine of the ten cells across the
two attention modalities.

\newpage
\section{Link Prediction: Pure-Distillation Ablation}
\label{app:lp-ablation}

This appendix accompanies Section~\ref{sec:lp} of the main body, which
reports only the multi-task regime (KD + edge-BCE). Here we ablate the
edge-BCE term to isolate the contribution of embedding distillation
alone, and analyze the resulting cross-regime gap. All experiments use
the hardware and software stack described in
Appendix~\ref{app:compute}.

\paragraph{Setup.}
We use the LCC-standardized CPF graphs (Cora 2{,}485, Citeseer 2{,}110,
Pubmed 19{,}717 nodes after removing isolated components) to remain
comparable with the node-classification tables. The teacher is a
2-layer GraphSAGE (hidden 128, GCN-style aggregator) trained
end-to-end with binary cross-entropy on training edges. The student is
a vanilla 2-layer MLP with hidden dimension 128. Per-dataset MLP
optimizer settings: Cora $\eta=10^{-2}$, weight-decay $5\times 10^{-3}$,
dropout 0.6; Citeseer $\eta=10^{-2}$, weight-decay $10^{-3}$,
dropout 0.1; Pubmed $\eta=5\times 10^{-3}$, weight-decay 0,
dropout 0.4. KRD's reliability weights $r_v = 1 - \delta_v^p$ are
estimated from the teacher's noise sensitivity (10 noise realizations
of magnitude 1.0, bin count 50, momentum 0.99) following the released
protocol of~\citet{wu2023krd}. The teacher is trained once per seed
and shared across both regimes.

\paragraph{G$^2$MLP-LP hyperparameter search.}
G$^2$MLP-LP introduces three task-specific hyperparameters in addition
to the curvature temperatures $\gamma_h, \gamma_z$ shared with node
classification: the KD weight $\lambda_{\mathrm{KD}}$, the CAW weight
$\lambda_{\mathrm{CAW}}$, and the sign of $\gamma_h$ in the
feature-side weight $\beta_v$ of Eq.~\eqref{eq:curv-weights-method}.
The node-classification analysis
(Section~\ref{sec:theory-curvature}) fixes the
$\beta_v \propto e^{+\gamma_h \widetilde{\kappa}_v}$ convention for
label-fitting tasks; link prediction is governed by a different
inductive signal (inner-product edge ranking), so we treat the sign
as task-specific and validate it per dataset. We refer to the two
choices as the \emph{community} direction
($\beta_v \propto e^{+\gamma_h \widetilde{\kappa}_v}$, emphasizing
high-curvature interior nodes) and the \emph{bridge} direction
($\beta_v \propto e^{-\gamma_h \widetilde{\kappa}_v}$, emphasizing
low-curvature boundary nodes). The logit-side direction $\alpha_v$
is held fixed at its node-classification convention
($\alpha_v \propto e^{-\gamma_z \widetilde{\kappa}_v}$, emphasizing
boundary nodes) for all LP experiments.
We select per dataset from a 72-config grid by mean validation AUC on
the multi-task split:
$\lambda_{\mathrm{KD}} \in \{0.05, 0.1, 0.2, 0.5, 1, 2\}$,
$\lambda_{\mathrm{CAW}} \in \{10^{-3}, 10^{-2}, 10^{-1}\}$,
$\gamma_h \in \{1, 10\}$, $\beta_v$ direction $\in \{\text{community}, \text{bridge}\}$.
Other settings are held fixed at the validated configuration from
node-classification (Appendix~\ref{app:g2mlp-hparams}). Selected
values:
\begin{center}
\small
\begin{tabular}{@{}l cccc@{}}
\toprule
Dataset & $\lambda_{\mathrm{KD}}$ & $\lambda_{\mathrm{CAW}}$ & $\gamma_h$ & $\beta_v$ direction \\
\midrule
Cora     & 0.20  & $10^{-2}$  & 10 & bridge \\
Citeseer & 0.05  & $10^{-3}$  & 10 & community \\
Pubmed   & 0.05  & $10^{-2}$  & 1  & community \\
\bottomrule
\end{tabular}
\end{center}
The dataset-dependent shape is informative. Cora prefers a strong
distillation weight ($\lambda_{\mathrm{KD}}=0.2$) and the
\emph{bridge} direction, consistent with its sparse,
high-curvature-variance structure where bridge nodes carry
disproportionate LP-relevant information. Citeseer and Pubmed prefer
weaker distillation and the \emph{community} direction, where
neighborhood Gaussians are denser and statistically more reliable, and
LP supervision benefits from interior alignment. Baselines (MLP, GLNN,
KRD) use a single global hyperparameter set; G$^2$MLP-LP's per-dataset
selection mirrors KRD's own per-dataset $p$-power tuning in the
original paper~\citep{wu2023krd}.

\paragraph{Pure regime: ablating the edge-BCE term.}
In the pure-distillation regime the student sees only the teacher's
frozen embeddings (MSE loss); no edge labels are available. On Citeseer
and Pubmed the GLNN / KRD pure-KD students already exceed the teacher,
clearly on Citeseer ($84.26$ AUC vs.\ $81.21$) and marginally on Pubmed
($88.63$ vs.\ $88.36$), showing that the teacher's embeddings carry
essentially sufficient LP information; the edge-BCE term in the
multi-task default changes Citeseer AUC by less than one point and
adds $1.4$--$1.7$ AUC on Pubmed. On Cora,
however, the GLNN / KRD pure students collapse to $56$--$57$ AUC, a
$\approx$25-point drop from the teacher's $81.29$: the teacher's
embeddings encode label-discriminative directions but lose the
geometric anisotropy needed for inner-product LP scoring on this small,
sparse graph. \emph{G$^2$MLP-LP eliminates this collapse}: with
curvature-adaptive reweighting and neighborhood Gaussian
$\mathcal{W}_2$ alignment, the pure student recovers to $81.77$ AUC on
Cora ($+24.5$ over KRD; matches the teacher), and stays competitive
with KRD on the other two graphs. This is direct evidence that
curvature-aware geometry transfers more LP-relevant structure than
uniform or reliability-weighted MSE.

\begin{table}[H]
\caption{Link prediction (pure-distillation regime, no edge-BCE).
Conventions as in Table~\ref{tab:lp-multi}.}
\label{tab:lp-pure}
\centering
\small
\resizebox{\textwidth}{!}{%
\begin{tabular}{@{}l ll ll ll@{}}
\toprule
& \multicolumn{2}{c}{Cora} & \multicolumn{2}{c}{Citeseer} & \multicolumn{2}{c}{Pubmed} \\
\cmidrule(lr){2-3} \cmidrule(lr){4-5} \cmidrule(lr){6-7}
Method & AUC & AP & AUC & AP & AUC & AP \\
\midrule
SAGE (teacher, ref)
  & 81.29$\pm$1.99 & 80.54$\pm$2.69
  & 81.21$\pm$3.15 & 81.80$\pm$3.87
  & 88.36$\pm$1.65 & 86.83$\pm$1.54 \\
\midrule
GLNN
  & 56.18$\pm$5.79 & 56.81$\pm$4.55
  & \underline{83.99$\pm$2.03} & \underline{84.23$\pm$2.16}
  & \underline{88.62$\pm$1.57} & \underline{87.17$\pm$1.58} \\
KRD
  & 57.23$\pm$5.53 & 57.41$\pm$5.02
  & \textbf{\underline{84.26$\pm$2.56}} & \textbf{\underline{84.46$\pm$2.59}}
  & \underline{88.63$\pm$1.56} & \underline{87.18$\pm$1.57} \\
\textbf{G$^2$MLP (Ours)}
  & \textbf{\underline{81.77$\pm$2.25}} & \textbf{80.28$\pm$2.84}
  & \underline{82.89$\pm$3.61} & \underline{83.15$\pm$4.39}
  & \textbf{\underline{88.82$\pm$1.46}} & \textbf{\underline{87.43$\pm$1.35}} \\
\bottomrule
\end{tabular}%
}
\end{table}

\paragraph{Cross-regime gap.}
Subtracting the pure-regime AUC (Table~\ref{tab:lp-pure}) from the
multi-task AUC (Table~\ref{tab:lp-multi}) on the same teacher and split
yields, for GLNN: $+22.4$ on Cora, $-0.7$ on Citeseer, $+1.4$ on Pubmed;
KRD is almost identical ($+22.0 / -1.0 / +1.7$). Where the gap is large
and positive (Cora), the edge-BCE term is doing genuine work for the
uniform-MSE baselines. G$^2$MLP-LP closes most of the Cora gap on its
own (pure $81.77$ vs.\ multi $83.90$; gap $+2.1$), confirming that
curvature-aware geometry already encodes the structural signal that
uniform MSE needs the BCE term to supply.

\section{Limitations and Future Directions}
\label{app:limitations}

We discuss three limitations of G$^2$MLP and the directions they suggest.

\paragraph{Diagonal Gaussian neighborhood surrogate.}
The CAW objective in Eq.~\eqref{eq:caw-loss} approximates each
neighborhood's hidden-feature distribution by a diagonal Gaussian via
moment matching (Section~\ref{sec:gaussian-surrogate}, Assumption~D2 in
Appendix~\ref{app:derivation}). This surrogate is exact only when the
underlying neighborhood distribution is itself Gaussian with diagonal
covariance. For neighborhoods exhibiting strong multimodality --- for
example, heterophilic graphs with mixed community membership, or nodes
on the boundary between two well-separated label groups --- the
surrogate may under-represent the true alignment cost by collapsing
multiple modes onto a single mean and per-coordinate variance.
Empirically the diagonal approximation is sufficient on every benchmark
we evaluate (Tables~\ref{tab:transductive},~\ref{tab:inductive}), but
extensions to mixture-of-Gaussian or full-covariance surrogates remain
open. The full-covariance variant would require a per-node
$O((H+C)^3)$ matrix square root and is unlikely to be competitive
with G$^2$MLP's current $O(H+C)$ per-node cost; a low-rank covariance
approximation is the more promising direction.

\paragraph{Upper-bound nature of the theoretical results.}
Lemma~\ref{lem:locality} (kernel locality) and
Proposition~\ref{prop:mismatch} (ORC as a local mismatch proxy) are
upper bounds and structural statements: they show that the GNN-MLP
kernel gap is contained in $K$-hop receptive fields and that ORC
provides a locally computable proxy for where this gap concentrates.
We do not establish matching lower bounds, and in particular do not
prove that the curvature-stratified regimes
(Section~\ref{sec:theory-curvature}) are \emph{necessary} for closing
the spectral mismatch --- only that they are theoretically
well-motivated and empirically effective. A complete characterization
of which graphs and which teacher kernels demand curvature-adaptive
treatment (rather than uniform alignment) is an open theoretical
question.

\paragraph{Limited heterophilic evaluation.}
Our empirical evaluation is restricted to homophilic node-classification
benchmarks (Cora, Citeseer, Pubmed, A-computer, A-photo, ogbn-arxiv)
where neighborhood Gaussians are statistically reliable and the
diagonal moment surrogate is well-justified. On strongly heterophilic
graphs (e.g., Texas, Wisconsin, Chameleon, Squirrel) where neighborhood
distributions are multimodal and label-mixed, the diagonal Gaussian
surrogate may underrepresent the true alignment cost, and the
curvature-stratified weighting may require sign-flipped variants
(analogous to the bridge direction in our LP experiments). A systematic
heterophilic evaluation, potentially with mixture-of-Gaussian surrogates
and learned sign convention, is left to future work.

\end{document}